\newcommand{\blind}{0}
\begin{document}

\bibliographystyle{natbib}

\def\spacingset#1{\renewcommand{\baselinestretch}{#1}\small\normalsize} \spacingset{1}


\if0\blind
{
  \date{}
  \title{\bf Graph Structure Learning from Unlabeled Data for Event Detection}
  \author{Sriram Somanchi\thanks{somanchi.1@nd.edu}
  \hspace{.2cm} \\
  Mendoza College of Business, University of Notre Dame
  \and 
  Daniel B. Neill\thanks{neill@cs.cmu.edu} 
  \hspace{.2cm}\\
  Event and Pattern Detection Laboratory, Carnegie Mellon University}
  \maketitle
} \fi

\if1\blind
{
  \bigskip
  \bigskip
  \bigskip
  \begin{center}
    {\LARGE\bf Graph Structure Learning from Unlabeled Data for Event Detection}
\end{center}
  \medskip
} \fi

\bigskip
\begin{abstract}
Processes such as disease propagation and information diffusion often spread over some latent network structure which must be learned from observation.
Given a set of unlabeled training examples representing occurrences of an event type of interest (e.g., a disease outbreak),
our goal is to learn a graph structure that can be used to accurately detect future events of that type.  Motivated by new theoretical results on the consistency of constrained and unconstrained subset scans, we propose a novel framework for learning graph structure from unlabeled data by comparing the most anomalous subsets detected with and without the graph constraints. Our framework uses the mean normalized log-likelihood ratio score to measure the quality of a graph structure, and efficiently searches for the highest-scoring graph structure. Using simulated disease outbreaks injected into real-world Emergency Department data from Allegheny County, we show that our method learns a structure similar to the true underlying graph, but enables faster and more accurate detection.
\end{abstract}

\noindent%
{\it Keywords: graph learning, event detection, disease surveillance, spatial scan statistic}

\spacingset{1.45}


\newcommand{\ignore}[1]{}
\section{Introduction}
\label{}
Event detection in massive data sets has applications to multiple
domains, such as information diffusion or detecting disease outbreaks.
In many of these domains, the data has an underlying graph or network
structure: for example, an outbreak might spread via person-to-person
contact, or the latest trends might propagate through a social network.
In the typical, graph-based event detection problem, we
are given a graph structure $G=(V,E)$ and a time series of observed
counts for each graph node $v_i$, and must detect connected
subgraphs where the recently observed counts are significantly higher
than expected.  For example, public health officials wish to achieve
early and accurate detection of emerging outbreaks by identifying
connected regions (e.g., subsets of spatially adjacent zip codes $v_i$)
with anomalously high counts of disease cases.

Assuming that the graph structure is known, various graph-based event detection methods~\citep{patil04} can be used to detect anomalous
subgraphs.  We review these methods in \S\ref{gbed} below.  Typically, however, the network structure is
\emph{unknown}.  For example, the spread of disease may be influenced not only by spatial adjacency but also by commuting patterns (e.g., individuals
work downtown but live in a suburb), contamination of food or water sources, animal migrations, or other factors.  Assuming an incorrect graph
structure can result in less timely and less accurate event detection, since the affected areas may be disconnected and hence may not be identified
as an anomalous subgraph.  In such cases, \emph{learning} the correct graph structure (e.g., from historical data) has the potential to dramatically
improve detection performance.

Thus we consider the graph-based event detection problem in the case
where the true graph structure $G_T$ is unknown and must be inferred from data.
To learn the graph, we are given a set of training examples $\{D_1 \ldots
D_J\}$, where each example $D_j$ represents a different ``snapshot'' of
the data when an event is assumed to be occurring in some subset of
nodes that is connected given the (unknown) graph structure.  We assume
that training examples are generated from some underlying distribution
on the true latent graph structure, and wish to accurately detect future events
drawn from that same distribution.  Thus our goal is to learn a graph
structure that minimizes detection time and maximizes accuracy when
used as an input for event detection.

Several recent methods~\citep{Gomez-Redriguez:2010:NetInf,Myers:2010:LSNI, Gomez-Redriguez:2012:MultiTree} learn an underlying graph structure using
\emph{labeled} training data, given the true affected subset of nodes $S_j^T$ for each training example $D_j$. However, in many cases labeled data is
unavailable: for example, public health officials might be aware that an outbreak has occurred, but may not know which areas were affected and when.
Hence we focus on learning graph structure from \emph{unlabeled} data, where the affected subset of nodes $S_j^T$ for each training example is not
given, and we observe only the observed and expected counts at each node. In the remainder of this paper, we present a novel framework for graph
structure learning from unlabeled data, and show that the graphs learned by our approach enable more timely and more accurate event detection.  We support these
empirical evaluations with new theoretical results on the consistency of constrained and unconstrained subset scans, as described in \S3 and \S4.4 below.

\subsection{Graph-Based Event Detection}\label{gbed}

Given a graph $G=(V,E)$ and the observed and expected counts at each graph node, existing methods for \emph{graph-based event detection} can be used to identify the most anomalous
connected subgraph.  Here we focus on the \emph{spatial scan} framework for event detection, which was first developed by~\citet{kulldorff97a}, building on work
by~\citet{naus65} and others, and extended to graph data by~\citet{patil04}.  These methods maximize the \emph{log-likelihood ratio statistic} $F(S) = \log
\frac{\mbox{Pr}(\mbox{Data} \:|\: H_1(S))}{\mbox{Pr}(\mbox{Data} \:|\: H_0)}$ over connected subgraphs $S$.  Searching over connected subgraphs, rather than clusters of fixed shape
such as circles~\citep{kulldorff97a} or rectangles~\citep{neill04b}, can increase detection power and accuracy for irregularly shaped spatial clusters.

In this paper, we assume that the score function $F(S)$ is an \emph{expectation-based scan statistic}~\citep{neill05b}.  The null hypothesis $H_0$ assumes that no events are
occurring, and thus each observed count $x_i$ is assumed to be drawn from some distribution with mean equal to the expected count $\mu_i$: $x_i \sim \mbox{Dist}(\mu_i)$.  The
alternative hypothesis $H_1(S)$ assumes that counts in subgraph $S$ are increased by some constant multiplicative factor $q>1$: $x_i \sim \mbox{Dist}(q\mu_i)$ for $v_i \in S$, and $x_i \sim
\mbox{Dist}(\mu_i)$ for $v_i \not\in S$, where $q$ is chosen by maximum likelihood estimation.  We further assume that $\mbox{Dist}$ is some distribution in the \emph{separable
exponential family}~\citep{neill-ltss}, such as the Poisson, Gaussian, or exponential.  This assumption enables efficient identification of the highest-scoring connected subgraph
and highest-scoring unconstrained subset, which will be important components of our graph structure learning framework described below.  Our evaluation
results below assume the expectation-based Poisson statistic~\citep{neill05b}.  In this case, the log-likelihood ratio score can be computed as $F(S) = C \log (C/B) + B - C$, if
$C>B$, and 0 otherwise, where $C = \sum_{v_i \in S} x_i$ and $B = \sum_{v_i\in S} \mu_i$.

Maximizing the log-likelihood ratio statistic $F(S)$ over connected subgraphs is a challenging computational problem for which multiple algorithmic approaches exist.  The two main
methods we consider in this paper are GraphScan~\citep{speakman14} and Upper Level Sets (ULS)~\citep{patil04}.  GraphScan is guaranteed to find the highest-scoring connected subgraph
for the expectation-based scan statistics considered here, but can take exponential time in the worst case.  ULS scales quadratically with graph size, but is a heuristic that is not
guaranteed to find the optimal subgraph. GraphScan requires less than a minute of computation time for a $\sim$100 node graph, and improves detection power as compared to ULS, but is
computationally infeasible for graphs larger than $200$ to $300$ nodes~\citep{speakman14}.  We also note that the previously proposed FlexScan method \citep{tango05} identifies
subgraphs nearly identical to those detected by GraphScan, but is computationally infeasible for graphs larger than $\sim$30 nodes.

As shown by~\citet{speakman14}, the detection performance of GraphScan and ULS is often improved by incorporating proximity as well as connectivity constraints, thus
preventing these methods from identifying highly irregular tree-like structures.  To do so, rather than performing a single search over the entire graph, we perform separate searches
over the ``local neighborhood'' of each of the $N$ graph nodes, consisting of that node and its $k-1$ nearest neighbors for some constant $k$.  We then report the highest-scoring connected subgraph over
all local neighborhoods.

\section{Problem Formulation}

Our framework for graph learning takes as input a set of training examples $\{D_1\ldots D_J\}$, assumed to be independently drawn from some
distribution $\mathbf{D}$.  For each example $D_j$, we are given the observed count $x_i$ and expected count $\mu_i$ for each graph node $v_i$, $i = 1 \ldots
N$.  We assume that each training example $D_j$ has an set of affected nodes $S_j^T$ that is a connected subgraph of the true underlying graph
structure $G_T$; note that both the true graph $G_T$ and the subgraphs $S_j^T$ are unobserved.  Unaffected nodes $v_i \not\in S_j^T$ are assumed to have
counts $x_i$ that are drawn from some distribution with mean $\mu_i$, while affected nodes $v_i \in S_j^T$ are assumed to have higher counts.  Given
these training examples, we have three main goals:\\[1ex]
1) Accurately estimate the true underlying graph structure $G_T$.  Accuracy of graph learning is measured by the precision and recall of the learned set of graph edges $G^\ast$ as compared to the true graph $G_T$.\\[1ex]
2) Given a separate set of test examples $\{D_1\ldots D_J\}$ drawn from $\mathbf{D}$, identify the affected subgraphs $S_j^T$.  Accuracy of detection is measured by the average overlap coefficient between the true and identified subgraphs.\\[1ex]
3) Distinguish test examples drawn from $\mathbf{D}$ from examples with no affected subgraph ($S_j^T = \emptyset$).  Detection power is measured by the true positive rate (proportion of correctly identified test examples) for a fixed false positive rate (proportion of incorrectly identified null examples). \\ \\
The second and third performance measures assume that the learned graph $G^\ast$ is used as an input for a graph-based event detection method such as GraphScan, and that
method is used to identify the highest scoring connected subgraph of $G^\ast$ for each test example.

A key insight of our graph learning framework is to evaluate the quality
of each graph structure $G_m$ ($m$ denotes number of edges in the graph)
by comparing the most anomalous subsets
detected with and without the graph constraints.  For a given training example $D_j$, we
can use the fast subset scan~\citep{neill-ltss} to identify the
highest-scoring unconstrained subset $S_j^\ast = \arg\max_{S\subseteq V} F(S)$, with score
$F_j = F(S_j^\ast)$.  This can be done very efficiently, evaluating a number of subsets that is linear rather than exponential
in the number of graph nodes, for any function satisfying the linear-time subset scanning property~\citep{neill-ltss},
including the expectation-based scan statistics considered here.  We can use either GraphScan~\citep{speakman14} or ULS~\citep{patil04} to estimate the highest-scoring connected subgraph $S_{mj}^\ast = \arg\max_{S \subseteq V:\:S
\:\mbox{\small{connected in}}\: G_m} F(S)$, with score $F_{mj} =
F(S_{mj}^\ast)$.  We then compute the \emph{mean normalized score}
$\bar F_{norm}(G_m) = \frac{1}{J} \sum_{j=1\ldots J}
\frac{F_{mj}}{F_j}$, averaged over all $J$ training examples, as a
measure of graph quality.

As noted above, we assume that the affected subset of nodes for each
training example is a connected subgraph of the true (unknown) graph
structure $G_T$.  Intuitively, if a given graph $G_m$ is similar to
$G_T$, then the maximum connected subgraph score $F_{mj}$ will be close
to the maximum unconstrained subset score $F_j$ for many training
examples, and $\bar F_{norm}(G_m)$ will be close to 1.  On the
other hand, if graph $G_m$ is missing essential connections, then we
expect the values of $F_{mj}$ to be much lower than the corresponding
$F_j$, and $\bar F_{norm}(G_m)$  will be much lower than 1.
Additionally, we would expect a graph $G_m$ with high scores $F_{mj}$ on
the training examples to have high power to detect future events drawn
from the same underlying distribution.  However, any graph with a large
number of edges will also score close to the maximum unconstrained
score.  For example, if graph $G_m$ is the complete graph on $N$ nodes,
all subsets are connected, and $F_{mj} = F_j$ for all training examples
$D_j$, giving $\bar F_{norm}(G_m) = 1$.  Such
under-constrained graphs will produce high scores $F_{mj}$ even when
data is generated under the null hypothesis, resulting in reduced
detection power. Thus we wish to optimize the tradeoff between higher
mean normalized score and lower number of edges $m$.  Our solution is to
compare the mean normalized score of each graph structure $G_m$ to the
distribution of mean normalized scores for random graphs with the same
number of edges $m$, and choose the graph with the most significant
score given this distribution.

\section{Theoretical Development}

In this section, we provide a theoretical justification for using the mean
normalized score, $\bar F_{norm}(G_m) = \frac{1}{J} \sum_{j=1\ldots J}
\frac{F_{mj}}{F_j}$, as a measure of the quality of graph $G_m$.  Our key result is a proof that the expected value $E\left[\frac{F_{mj}}{F_j}\right] = 1$ if and only if graph $G_m$ contains the
true graph $G_T$, assuming a sufficiently strong and homogeneous signal.  More precisely, let us assume the following: \\[1ex]
\emph{(A1)} Each training example $D_j$ has an affected subset $S_j^T$ that is a connected subgraph of $G_T$.  Each $D_j$ is an independent random draw from some distribution $\mathbf{D}$, where
each connected subgraph $S_j^T$ is assumed to have some non-zero probability $P_j$ of being affected.\\
\emph{(A2)} The score function $F(S)$ is an expectation-based scan statistic in the separable exponential family.  Many distributions, such as the Poisson, Gaussian, and exponential, satisfy this property.\\[1ex]
Now, for a given training example $D_j$, we define the \emph{observed excess risk} $g_{ij} = \frac{x_i}{\mu_i}-1$ for each node $v_i$.
Let $r_{\max}^{\text{aff},j} = \max_{v_i \in S_j^T} g_{ij}$
and $r_{\min}^{\text{aff},j} = \min_{v_i \in S_j^T} g_{ij}$ denote the maximum and minimum of the observed excess risk over affected nodes, and $r_{\max}^{\text{unaff},j} = \max_{v_i \not\in S_j^T} g_{ij}$ denote the
maximum of the observed excess risk over unaffected nodes, respectively.  We say that the signal for training example $D_j$ is $\alpha$-strong if and only if $r_{\min}^{\text{aff},j} > \alpha r_{\max}^{\text{unaff},j}$,
and we say that the signal for training example $D_j$ is $\alpha$-homogeneous if and only if $r_{\max}^{\text{aff},j} < \alpha r_{\min}^{\text{aff},j}$.
We also define the \emph{signal size} for training example $D_j$, $\eta_j = \frac{\sum_{v_i \in S_j^T} \mu_i}{\sum_{v_i} \mu_i} \le 1$.  Given assumptions (A1)-(A2) above, we can show:
\newtheorem{lemma}{Lemma}
\begin{restatable}{lemma}{twolemma}
\label{lemma_superset}
For each training example $D_j$, there exists a constant $\alpha_j > 1$
such that, if the signal is $\alpha_j$-homogeneous and 1-strong, then
the highest scoring unconstrained subset $S_j^{\ast} \supseteq S_j^T$.
We note that $\alpha_j$ is a function of $r_{\max}^{\text{aff},j}$, and $\alpha_j \ge 2$ for the Poisson, Gaussian, and exponential distributions.
\end{restatable}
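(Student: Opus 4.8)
The plan is to use the decomposition of the expectation-based scan statistic as $F(S) = \max_{q\ge 1}\sum_{v_i\in S}\psi_i(q)$, where $\psi_i(q) := \log\frac{\Pr(x_i\mid q\mu_i)}{\Pr(x_i\mid \mu_i)}$ is the contribution of node $v_i$ at multiplier $q$. Two structural facts about the separable exponential family \citep{neill-ltss} do the work. First, for $q>1$ the contribution $\psi_i(q)$ is positive if and only if $g_{ij} > h(q)$, where $h$ is nonnegative, vanishes at $q=1$, and is strictly increasing on $[1,\infty)$; explicitly, $h(q)=\frac{q-1}{\log q}-1$ for the Poisson, $h(q)=\frac{q-1}{2}$ for the Gaussian, and $h(q)=\frac{q\log q}{q-1}-1$ for the exponential. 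Second, the maximum-likelihood multiplier $\hat q(S)$ for a subset $S$ is a $\mu$-weighted average of the values $1+g_{ij}$ over $v_i\in S$, so $\hat q(S)\le 1+\max_{v_i\in S}g_{ij}$.

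Given these, I would argue as follows. Let $S'$ be \emph{any} subset maximizing $F$. Because $\alpha_j$-homogeneity forces $r_{\min}^{\text{aff},j}>0$ (see below), the signal on $S_j^T$ is genuinely positive, so $F(S')\ge F(S_j^T)>0$; hence the maximum over $q\ge 1$ is attained at an interior point $q':=\hat q(S')>1$ with $F(S')=\sum_{v_i\in S'}\psi_i(q')$. Suppose, for contradiction, that some affected node $v_k\in S_j^T$ is missing from $S'$. By $1$-strength, every node---affected or not---has excess risk at most $r_{\max}^{\text{aff},j}$, so $q'\le 1+\max_{v_i\in S'}g_{ij}\le 1+r_{\max}^{\text{aff},j}$, and monotonicity of $h$ gives $h(q')\le h\!\left(1+r_{\max}^{\text{aff},j}\right)$. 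Set
\[
\alpha_j \;:=\; \frac{r_{\max}^{\text{aff},j}}{h\!\left(1+r_{\max}^{\text{aff},j}\right)},
\]
which depends only on $r_{\max}^{\text{aff},j}$. Then $\alpha_j$-homogeneity, i.e.\ $r_{\max}^{\text{aff},j}<\alpha_j\, r_{\min}^{\text{aff},j}$, says precisely that $r_{\min}^{\text{aff},j} > h\!\left(1+r_{\max}^{\text{aff},j}\right)\ge h(q')$, so $g_{kj}\ge r_{\min}^{\text{aff},j}>h(q')$ and therefore $\psi_k(q')>0$. Consequently $F(S'\cup\{v_k\})\ge \sum_{v_i\in S'}\psi_i(q')+\psi_k(q') = F(S')+\psi_k(q')>F(S')$, contradicting the maximality of $S'$. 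Hence every maximizer of $F$ contains $S_j^T$; in particular the highest-scoring unconstrained subset satisfies $S_j^{\ast}\supseteq S_j^T$.

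To finish I need $\alpha_j>1$, together with the asserted positivity of the signal and the bound $\alpha_j\ge 2$ for the three named distributions. The quantity $\alpha_j$ (and the homogeneity hypothesis) is only meaningful when $r_{\max}^{\text{aff},j}>0$, and then $r_{\max}^{\text{aff},j}<\alpha_j r_{\min}^{\text{aff},j}$ forces $r_{\min}^{\text{aff},j}>0$, as used above. For $\alpha_j>1$ one needs $h(1+r)<r$ with $r=r_{\max}^{\text{aff},j}>0$; equivalently, a node whose count equals $q\mu_i$ exactly (excess risk $q-1$, with $q=1+r$) must have $\psi_i(q)>0$. This holds for each of these distributions, since $q\mu_i$ is the single-observation MLE of the mean and, as $q>1$, strictly beats $\mu_i$: $\psi_i(q)=\log\Pr(x_i\mid x_i)-\log\Pr(x_i\mid\mu_i)>0$ (one checks directly $\psi_i(1+r)=\mu_i\bigl((1+r)\log(1+r)-r\bigr)$, $\mu_i^2 r^2/(2\sigma^2)$, and $\mu_i\bigl(r-\log(1+r)\bigr)$ for the Poisson, Gaussian, and exponential). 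Finally, substituting the explicit $h$: the Gaussian gives $\alpha_j = r/(r/2)=2$ identically; the Poisson reduces $\alpha_j\ge 2$ to $\log(1+r)\ge \frac{2r}{2+r}$; and the exponential reduces it to $u-\frac{1}{u}\ge 2\log u$ with $u=1+r$. Each of the latter two is elementary: both sides agree at $r=0$, and the derivative inequality collapses to $(u-1)^2\ge 0$, i.e.\ $r^2\ge 0$. I expect the only real obstacle to be making the two structural facts precise across the whole separable exponential family---monotonicity of the threshold $h$ and the averaging bound on $\hat q$---rather than distribution by distribution, which means reasoning from the family's defining form in \citep{neill-ltss}, plus a little care to exclude the degenerate maximizer $q=1$, $S_j^{\ast}=\emptyset$. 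The remaining steps are the short comparison above and the two scalar inequalities.
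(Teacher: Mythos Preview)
Your argument is correct and is essentially the paper's proof in different clothing. Your threshold function $h$ is the reparametrization $h(1+r)=f(r)$ of the paper's map $f$ from $r_i^{\max}$ to $r_i^{\text{mle}}$ (Equation~(\ref{r_mapper})), so your $\alpha_j=r_{\max}^{\text{aff},j}/h(1+r_{\max}^{\text{aff},j})$ coincides exactly with the paper's $\alpha_j=r_{\max}^{\text{aff},j}/f(r_{\max}^{\text{aff},j})$, and your concavity check $\alpha_j\ge 2$ for the three named distributions matches the paper's observation that $r_i^{\text{mle}}\le r_i^{\max}/2$ when $\theta_0$ is concave.

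The one genuine difference is structural rather than substantive: the paper first invokes the LTSS property to reduce to subsets of the form $\{v_{(1)},\ldots,v_{(k)}\}$ and then argues $k\ge t$, whereas you bypass LTSS entirely with a direct contradiction (any missing affected node has $\psi_k(q')>0$ at the optimizer's own $\hat q$, so adjoining it strictly raises the score). Your route is slightly more elementary and shows, as a bonus, that \emph{every} maximizer contains $S_j^T$; the paper's route makes the nested structure of candidate optima explicit, which is what it leans on elsewhere. Both rely on the same two ingredients you flag at the end---monotonicity of the positivity threshold and the bound $\hat q(S)\le 1+\max_{v_i\in S}g_{ij}$---and both leave the fully general separable-exponential-family form of those facts to \citet{neill-ltss,speakman15}.
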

\begin{restatable}{lemma}{alphalemma}
\label{lemma_subset}
For each training example $D_j$, there exists a constant $\beta_j > 1$
such that, if the signal is $\frac{\beta_j}{\eta_j}$-strong, then the
highest scoring unconstrained subset $S_j^{\ast} \subseteq S_j^T$.
We note that $\beta_j$ is a function of $r_{\max}^{\text{unaff},j}$, and $\beta_j \le 2$ for the Gaussian distribution.
\end{restatable}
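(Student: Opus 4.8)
The plan is to reduce the claim, via the linear-time subset scanning property of~\citet{neill-ltss}, to a one-variable inequality about the score generating function, and then to verify that inequality in the parameter regime forced by the hypothesis. Recall first that for an expectation-based scan statistic in the separable exponential family the highest-scoring unconstrained subset is always one of the $N$ nested subsets obtained by sorting the nodes in decreasing order of the priority $x_i/\mu_i = 1 + g_{ij}$. Since $\eta_j \le 1$ and $\beta_j > 1$, the hypothesis of $\frac{\beta_j}{\eta_j}$-strength implies the signal is in particular $1$-strong, so in that sorted order every node of $S_j^T$ strictly precedes every node of $V \setminus S_j^T$; hence $S_j^T$ itself is a term of the nested sequence, and $S_j^\ast$ is either a subset of $S_j^T$ (in which case we are done) or has the form $S_j^T \cup U$ with $U$ a nonempty prefix of the sorted unaffected nodes. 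It therefore suffices to prove $F(S_j^T \cup U) < F(S_j^T)$ for every nonempty $U \subseteq V \setminus S_j^T$, since then the maximum over the nested sequence is attained at or before $S_j^T$.

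For the computation I would write $F(S) = B_S\,\psi(C_S/B_S)$, where $B_S = \sum_{v_i \in S}\mu_i$, $C_S = \sum_{v_i \in S}x_i$, and $\psi$ is the nonnegative convex function with $\psi(q) = 0$ for $q \le 1$ that is determined by the exponential family (for the Poisson statistic $\psi(q) = q\log q - q + 1$ on $q > 1$; for the Gaussian, $\psi(q) = \tfrac12(q-1)^2$). Write $q^T = C^T/B^T$ for $S_j^T$, let $\rho = \bigl(\sum_{v_i \in U}\mu_i\bigr)/B^T$, and let $q_U$ be the relative risk of $U$. Two structural facts drive the argument: first, $q^T \ge 1 + r_{\min}^{\text{aff},j}$ and $q_U \le 1 + r_{\max}^{\text{unaff},j} < q^T$, so the combined relative risk $\bar q = (q^T + \rho q_U)/(1+\rho)$ lies in $(q_U, q^T]$; second, $\sum_{v_i \notin S_j^T}\mu_i = \tfrac{1-\eta_j}{\eta_j}B^T$, so $\rho \le \tfrac{1-\eta_j}{\eta_j}$, which is precisely where the factor $1/\eta_j$ in the hypothesis enters. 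If $\bar q \le 1$ then $F(S_j^T \cup U) = 0 < F(S_j^T)$ (the latter being positive since $q^T > 1$), so assume $\bar q > 1$; then the target inequality reads $(1+\rho)\,\psi(\bar q) < \psi(q^T)$.

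I would then dispose of the two extra parameters by monotonicity. Because $\psi'$ is nondecreasing, $\tfrac{\partial}{\partial q_U}\bigl[(1+\rho)\psi(\bar q)\bigr] = \rho\,\psi'(\bar q) \ge 0$, and $\tfrac{\partial}{\partial q^T}\bigl[(1+\rho)\psi(\bar q) - \psi(q^T)\bigr] = \psi'(\bar q) - \psi'(q^T) \le 0$ since $\bar q \le q^T$; so the worst case is $q_U = 1 + u$ and $q^T = 1 + \ell$, writing $u := r_{\max}^{\text{unaff},j}$ and $\ell := r_{\min}^{\text{aff},j}$. Substituting $s := \bar q - 1 = \tfrac{\ell + \rho u}{1+\rho}$, one finds $s - u = \tfrac{\ell - u}{1+\rho}$, so $\rho \le \tfrac{1-\eta_j}{\eta_j}$ becomes $s \in \bigl[\,u + \eta_j(\ell - u),\ \ell\,\bigr)$, and $(1+\rho)\psi(\bar q) < \psi(q^T)$ becomes $\tfrac{\psi(1+s)}{s-u} < \tfrac{\psi(1+\ell)}{\ell-u}$ on that $s$-interval. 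For the Gaussian, where $\psi(1+t) = t^2/2$, this reduces after clearing denominators to $\ell > \tfrac{u\,s}{s-u}$; the right-hand side is decreasing in $s$, so it is enough to check $s = u + \eta_j(\ell-u)$, which gives $\ell > (1 + \eta_j^{-1/2})\,u$ --- a consequence of $\tfrac{2}{\eta_j}$-strength --- so $\beta_j = 2$ works for the Gaussian. For a general separable exponential family the same reduction applies and $\tfrac{\psi(1+s)}{s-u}$ is controlled using convexity of $\psi$ together with a Taylor/curvature bound on $\psi$ over $[1, 1 + u]$; the resulting threshold $\beta_j$ is then a function of $u = r_{\max}^{\text{unaff},j}$ through the range of $\psi''$ on that interval, as claimed.

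The delicate point --- and the reason the hypothesis carries the $1/\eta_j$ factor --- is that $s \mapsto \psi(1+s)/(s-u)$ diverges as $s \downarrow u$ (numerator $\to \psi(1+u) > 0$, denominator $\to 0$), so the required inequality genuinely fails for $s$ too close to $u$. What rescues the argument is that the left endpoint $u + \eta_j(\ell - u)$ of the relevant $s$-interval exceeds $\beta_j u$ exactly when the signal is $\frac{\beta_j}{\eta_j}$-strong, keeping $s$ a controlled distance from $u$. Verifying that $\psi(1+s)/(s-u)$ stays below $\psi(1+\ell)/(\ell-u)$ over the whole interval --- not merely at an endpoint --- is where the real work sits; an equivalent and perhaps more transparent route is to show directly that $b \mapsto (B^T + b)\,\psi\!\bigl(\tfrac{C^T + q_U b}{B^T+b}\bigr)$ is strictly decreasing in the added unaffected mass $b$ until it vanishes, using that its derivative $\psi(q(b)) - \psi'(q(b))\,(q^T - q_U)\,\tfrac{B^T}{B^T+b}$, with $q(b) = \tfrac{C^T + q_U b}{B^T+b}$, can be signed using $\tfrac{B^T}{B^T+b} \ge \eta_j$ and $q^T - q_U \ge \ell - u$.
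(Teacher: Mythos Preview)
Your reduction via the linear-time subset scanning property and the monotonicity-in-$(q^T,q_U)$ argument are correct, and your Gaussian computation giving $\beta_j=2$ is complete. But your route is genuinely different from the paper's, and for the general separable exponential family it remains a sketch rather than a proof.

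The paper does not work with the closed form $F(S)=B_S\,\psi(C_S/B_S)$ at all. Instead it exploits the additive per-node decomposition $F(S)=\max_{q>1}\sum_{v_i\in S}\lambda_i(q)$, using the fact (from~\citet{speakman15}) that each $\lambda_i$ is concave with zeros at $q=1$ and $q=1+f^{-1}(g_{ij})$, where $f$ is the explicit map $r^{\max}\mapsto r^{\mathrm{mle}}$ of their Equation~(\ref{r_mapper}). The paper sets $\beta_j=f^{-1}(u)/u$ with $u=r_{\max}^{\text{unaff},j}$, then lower-bounds the maximum-likelihood $q$ for \emph{any} candidate set $\{v_{(1)},\dots,v_{(k)}\}$ by the all-nodes ratio $1+\sum g_{ij}\mu_i/\sum\mu_i\ge 1+\eta_j\,r_{\min}^{\text{aff},j}$. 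The strength hypothesis then forces this MLE above $1+f^{-1}(u)$, so every unaffected $\lambda_i(\hat q)$ is strictly negative; dropping those terms only raises the score, giving $F(\{v_{(1)},\dots,v_{(k)}\})<F(S_j^T)$ for all $k>|S_j^T|$. This handles every separable exponential family in one stroke and produces an explicit $\beta_j$.

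Your approach recovers the same $\beta_j$ implicitly: the minimizer $s^\ast(u)$ of $h(s)=\psi(1+s)/(s-u)$ solves $\psi(1+s)=\psi'(1+s)(s-u)$, and one can check (e.g.\ for the Poisson this is $(1+u)\log(1+s)=s$) that $s^\ast(u)=f^{-1}(u)$. Taking $\beta_j=s^\ast(u)/u$ and observing that $s_0=u+\eta_j(\ell-u)>\beta_j u$ under the hypothesis then makes $h$ increasing on $[s_0,\ell]$, which is what you need. But you do not carry this out; the phrases ``controlled using convexity of $\psi$ together with a Taylor/curvature bound'' and ``through the range of $\psi''$'' do not establish that $h$ is eventually increasing, nor identify the threshold, nor show it depends only on $u$. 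The derivative route in your last paragraph has the same gap: signing $\psi(q(b))-\psi'(q(b))(q^T-q_U)\tfrac{B^T}{B^T+b}$ uniformly in $b$ is exactly the step that requires the explicit threshold. So for the general family your proposal stops short of a proof, whereas the paper's $\lambda_i$-based argument closes it with a single line.
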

\noindent Proofs of Lemma 1 and Lemma 2 are provided in the Appendix.
\newtheorem{theorem}{Theorem}
\begin{theorem}
\label{first_theorem}
If the signal is $\alpha_j$-homogeneous and $\frac{\beta_j}{\eta_j}$-strong for all training examples
$D_j \sim \mathbf{D}$, then the following properties hold for the assumed graph
$G_m$ and true graph $G_T$:

a) If $G_T \setminus G_m = \emptyset$ then $E\left[\frac{F_{mj}}{F_j}\right] = 1$.

b) If $G_T \setminus G_m \ne \emptyset$ then $E\left[\frac{F_{mj}}{F_j}\right] < 1$.
\end{theorem}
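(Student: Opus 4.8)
The plan is to combine Lemmas~\ref{lemma_superset} and~\ref{lemma_subset} with the trivial bound $F_{mj}\le F_j$. First, note that $\eta_j\le 1$ and $\beta_j>1$, so the hypothesis "$\frac{\beta_j}{\eta_j}$-strong" implies "$1$-strong"; hence under the theorem's hypotheses every training example satisfies the hypotheses of \emph{both} lemmas, giving $S_j^T\subseteq S_j^\ast$ and $S_j^\ast\subseteq S_j^T$, i.e.\ $S_j^\ast=S_j^T$ (and, reading the lemmas as bounding every maximizer of $F$, this $S_j^T$ is the \emph{unique} maximizer). Next, $S_{mj}^\ast$ maximizes $F$ over the connected subgraphs of $G_m$ — a nonempty family, since it contains every singleton — which is a subfamily of all subsets of $V$; therefore $F_{mj}\le F_j$ always. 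Finally, $\alpha_j$-homogeneity with $\alpha_j>1$ forces $r_{\min}^{\text{aff},j}>0$, so every affected node has $x_i>\mu_i$ and thus $F_j=F(S_j^T)>0$, making the ratio $F_{mj}/F_j\in[0,1]$ well defined for every $D_j$.

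For part (a), assume $G_T\setminus G_m=\emptyset$, i.e.\ $E(G_T)\subseteq E(G_m)$. Adding edges cannot disconnect an induced subgraph, so any vertex set connected in $G_T$ is connected in $G_m$; in particular $S_j^T$ is connected in $G_m$ and hence feasible in the maximization defining $F_{mj}$. Thus $F_{mj}\ge F(S_j^T)=F(S_j^\ast)=F_j$, which together with $F_{mj}\le F_j$ yields $F_{mj}=F_j$ for every $D_j$, so $F_{mj}/F_j\equiv 1$ and $E[F_{mj}/F_j]=1$.

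For part (b), assume $G_T\setminus G_m\neq\emptyset$ and fix an edge $(v_a,v_b)\in E(G_T)\setminus E(G_m)$. The two-node set $\{v_a,v_b\}$ is a connected subgraph of $G_T$, so by (A1) the event $\mathcal{E}=\{S_j^T=\{v_a,v_b\}\}$ has probability $P>0$. On $\mathcal{E}$ (intersected with the probability-one event on which the signal hypotheses hold), the first-paragraph reasoning gives $S_j^\ast=\{v_a,v_b\}$ as the unique maximizer of $F$; but $\{v_a,v_b\}$ induces a disconnected subgraph of $G_m$, since its only possible edge, $(v_a,v_b)$, is absent, so $S_{mj}^\ast\neq\{v_a,v_b\}$ and by uniqueness $F_{mj}=F(S_{mj}^\ast)<F(S_j^\ast)=F_j$. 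Hence $F_{mj}/F_j<1$ on $\mathcal{E}$, while $F_{mj}/F_j\le 1$ elsewhere; since $F_{mj}/F_j$ is bounded and is strictly below $1$ on an event of positive probability, $E[F_{mj}/F_j]<1$.

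The step I expect to be delicate is the \emph{uniqueness} of the unconstrained maximizer used in part (b): if some proper subset of $S_j^T$, or some alternative subset, also attained the score $F_j$ and happened to be connected in $G_m$ — e.g.\ the singleton $\{v_a\}$ — then we would only get $F_{mj}=F_j$ on $\mathcal{E}$ and part (b) would fail. So I would ensure the appendix proofs of Lemmas~\ref{lemma_superset} and~\ref{lemma_subset} are phrased so that, under the strict signal hypotheses, \emph{every} maximizer of $F$ satisfies $S_j^T\subseteq S_j^\ast$ and $S_j^\ast\subseteq S_j^T$ simultaneously, forcing $S_j^\ast=S_j^T$ uniquely; concretely, under the $1$-strong separation the linear-time subset scanning structure reduces the candidates to the nested prefixes in decreasing order of $x_i/\mu_i$, and strictness of the homogeneity and strength inequalities makes the score strictly unimodal along this chain with its unique peak at exactly the affected set. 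Everything else — the bound $F_{mj}\le F_j$, preservation of connectivity under adding edges, and "$\le 1$ almost surely plus $<1$ on a positive-probability event implies expectation $<1$" — is routine.
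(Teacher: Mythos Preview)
Your proposal is correct and follows essentially the same route as the paper's proof: invoke Lemmas~\ref{lemma_superset} and~\ref{lemma_subset} to get $S_j^\ast=S_j^T$, handle part~(a) by noting $S_j^T$ is connected in $G_m$, and handle part~(b) via the two-node affected set $\{v_a,v_b\}$ coming from an edge in $G_T\setminus G_m$, together with uniqueness of the unconstrained optimum. You supply several details the paper leaves implicit (that $\frac{\beta_j}{\eta_j}$-strong implies $1$-strong, that $F_j>0$ so the ratio is well defined, and a careful discussion of why uniqueness holds), and your part~(a) argument via $F_{mj}\ge F(S_j^T)=F_j$ together with $F_{mj}\le F_j$ is slightly cleaner than the paper's direct claim that $S_{mj}^\ast=S_j^T$; but the structure is the same.
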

\begin{proof}
Lemmas \ref{lemma_superset} and \ref{lemma_subset} imply that $S_j^{\ast} = S_j^T$ for all $D_j \sim \mathbf{D}$.  For part a), $G_T \setminus G_m = \emptyset$ implies that the affected subgraph $S_j^T$ (which is assumed to be connected in $G_T$) is connected in $G_m$ as well.  Thus $S_{mj}^{\ast} = S_j^T$, and $\frac{F_{mj}}{F_j} = 1$ for all $D_j \sim \mathbf{D}$.  For part b), $G_T \setminus G_m \ne \emptyset$ implies that there exists some pair of nodes $(v_1,v_2)$ such that $v_1$ and $v_2$ are connected in $G_T$ but not in $G_m$. By assumption (A1), the subset $S_j^T = \{v_1,v_2\}$ has non-zero probability $P_j$ of being generated, and we know $S_j^{\ast} = \{v_1,v_2\}$, but $S_{mj}^{\ast} \ne \{v_1,v_2\}$ since the subset is not connected in $G_m$.  Since the signal is $\alpha_j$-homogeneous and $\frac{\beta_j}{\eta_j}$-strong, we observe that $S_j^{\ast}$ is the unique optimum.  Thus we have $F_{mj} < F_j$ for that training example, and $ E\left[\frac{F_{mj}}{F_j}\right] \le 1-P_j\left(1-\frac{F_{mj}}{F_j}\right) < 1$.
\end{proof}

\section{Learning Graph Structure}

We can now consider the mean normalized score $\bar F_{norm}(G_m) = \frac{1}{J} \sum_{j=1\ldots J}
\frac{F_{mj}}{F_j}$ as a measure of graph quality, and for each number of edges $m$, we can search for the graph
$G_m$ with highest mean normalized score. However, it is computationally infeasible to search exhaustively over
all $2^{\frac{|V|(|V|-1)}{2}}$ graphs.  Even computing the mean normalized score of a single graph $G_m$ may
require a substantial amount of computation time, since it requires calling a graph-based event detection method
such as Upper Level Sets (ULS) or GraphScan to find the highest-scoring connected subgraph for each training
example $D_j$.  In our general framework for graph structure learning, we refer to this call as
BestSubgraph($G_m$, $D_j$), for a given graph structure $G_m$ and training example $D_j$.  Either ULS or GraphScan
can be used to implement BestSubgraph, where ULS is faster but approximate, and GraphScan is slower but guaranteed
to find the highest-scoring connected subgraph.  In either case, to make graph learning computationally tractable,
we must \emph{minimize} the number of calls to BestSubgraph, both by limiting the number of graph structures under
consideration, and by reducing the average number of calls needed to evaluate a given graph.

Thus we propose a \emph{greedy} framework for efficient graph structure learning that starts with the complete
graph on $N$ nodes and sequentially removes edges until no edges remain (Algorithm 1).  This procedure produces a
sequence of graphs $G_m$, for each $m$ from $M=\frac{N(N-1)}{2}$ down to 0. For each graph $G_m$, we produce graph
$G_{m-1}$ by considering all $m$ possible edge removals and choosing the one that maximizes the mean normalized
score.  We refer to this as BestEdge($G_m$, $D$), and consider three possible implementations of BestEdge in \S4.1
below.  Once we have obtained the sequence of graphs $G_0 \ldots G_M$, we can then use randomization testing to
choose the most significant graph $G_m$, as described in \S4.2.  The idea of this approach is to remove
unnecessary edges, while preserving essential connections which keep the maximum connected subgraph score close to
the maximum unconstrained subset score for many training examples.

However, a naive implementation of greedy search would require $O(N^4)$ calls to BestSubgraph, since $O(N^2)$
graph structures $G_{m-1}$ would be evaluated for each graph $G_m$ to choose the next edge for removal.  Even a
sequence of random edge removals would require $O(N^2)$ calls to BestSubgraph, to evaluate each graph $G_0 \ldots
G_M$.  Our efficient graph learning framework improves on both of these bounds, performing exact or approximate
greedy search with $O(N^3)$ or $O(N\log N)$ calls to BestSubgraph respectively.  The key insight is that removal
of an edge only requires us to call BestSubgraph for those examples $D_j$ where removing that edge disconnects the
highest scoring connected subgraph.  See \S4.3 for further analysis and discussion.

\vspace{-0.4cm}
\begin{center}
\begin{algorithm}[t]
\caption{Graph structure learning framework}
\label{alg}
\begin{algorithmic}[1]
\STATE Compute correlation $\rho_{ik}$ between each pair of nodes
$v_i$ and $v_k$, $i\neq k$.  These will be used in step 5.
\STATE Compute highest-scoring unconstrained subset $S_j^\ast$ and its
score $F_j$ for each example $D_j$ using the fast subset scan~\citep{neill-ltss}.
\STATE For $m=\frac{N(N-1)}{2}$, let $G_m$ be the complete graph on $N$
nodes.  Set $S_{mj}^\ast = S_j^\ast$ and $F_{mj} = F_j$ for all training examples
$D_j$, and set $\bar F_{norm}(G_m) = 1$.
\WHILE{number of remaining edges $m > 0$}
\STATE Choose edge $e_{ik} = \mathrm{BestEdge}(G_m,D)$, and set $G_{m-1} =
G_m$ with $e_{ik}$ removed.
\FOR{each training example $D_j$}
\STATE  If removing edge $e_{ik}$ disconnects subgraph $S_{mj}^\ast$, then set
$S_{m-1,j}^\ast = \mathrm{BestSubgraph}(G_{m-1},D_j)$ and $F_{m-1,j} =
F(S_{m-1,j}^\ast)$.  Otherwise set $S_{m-1,j}^\ast = S_{mj}^\ast$ and $F_{m-1,j} =
F_{mj}$.
\ENDFOR
\STATE Compute $\bar F_{norm}(G_{m-1}) = \frac{1}{J} \sum_{j = 1\ldots
J} \frac{F_{m-1,j}}{F_j}$.
\STATE $m \leftarrow m-1$
\ENDWHILE
\STATE Repeat steps 3-11 for $R$ randomly generated sequences of edge
removals to find the most significant graph $G_m$.
\end{algorithmic}
\end{algorithm}
\end{center}
\vspace{-0.4cm}

\subsection{Edge Selection Methods}

Given a graph $G_m$ with $m$ edges, we consider three methods $\mathrm{BestEdge}(G_m,D)$ for choosing the next
edge $e_{ik}$ to remove, resulting in the next graph $G_{m-1}$.  First, we consider an exact greedy search.  We
compute the mean normalized score $\bar F_{norm}(G_{m-1})$ resulting from each possible edge removal $e_{ik}$, and
choose the edge which maximizes $\bar F_{norm}(G_{m-1})$. As noted above, computation of the mean normalized score
for each edge removal is made efficient by evaluating the score $F_{m-1,j}$ only for training examples $D_j$ where
removing edge $e_{ik}$ disconnects the highest scoring subgraph. The resulting graph $G_{m-1}$ will have $\bar
F_{norm}(G_{m-1})$ as close as possible to $\bar F_{norm}(G_m)$.  We show in \S\ref{comp_sec} that only
$O(N)$ of the $O(N^2)$ candidate edge removals will disconnect the highest scoring subgraphs, reducing the number
of calls to BestSubgraph from quartic to cubic in $N$. However, this still may result in overly long run times,
necessitating the development of the alternative approaches below.

In the early stages of the greedy edge removal process, when the number
of remaining edges $m$ is large, many different edge removals $e_{ik}$
might not disconnect any of the subgraphs $S_{mj}^\ast$, and all such graphs
would have the same mean normalized score $\bar F_{norm}(G_{m-1}) = \bar
F_{norm}(G_m)$.  To avoid removing potentially important edges, we must
carefully consider how to break ties in mean normalized score.  In this
case, we choose the edge $e_{ik}$ with lowest \emph{correlation} between
the counts at nodes $v_i$ and $v_k$.  If two nodes are connected to each
other in the latent graph structure over which an event spreads, we
expect both nodes to often be either simultaneously affected by an event
in that part of the network, or simultaneously unaffected by an event in
some other part of the network, and hence we expect the observed counts
in these nodes to be correlated. Hence, if the Pearson correlation
$\rho_{ik}$ between two nodes $v_i$ and $v_k$ is very low, the
probability that the two nodes are connected is small, and thus edge
$e_{ik}$ can be removed.  We refer to the resulting algorithm, removing
the edge $e_{ik}$ which reduces the mean normalized score the least, and
using correlation to break ties, as the Greedy Correlation (GrCorr)
method.

Our second approach is based on the observation that GrCorr would require
$O(m)$ calls to BestSubgraph for each graph $G_m$, $m=1\ldots M$, which
may be computationally infeasible depending on the graph size and the
implementation of BestSubgraph.  Instead, we use the fact that $F_{m-1,j}
= F_{mj}$ if removing edge $e_{ik}$ does not disconnect subgraph $S_{mj}^\ast$,
and $F_{m-1,j} < F_{mj}$ otherwise.  To do so, we \emph{count} the number
of subgraphs $S_{mj}^\ast$, for $j=1\ldots J$, which would be disconnected by
removing each possible edge $e_{ik}$ from graph $G_m$, and choose the
$e_{ik}$ which disconnects the \emph{fewest} subgraphs. The resulting
graph $G_{m-1}$ is expected to have a mean normalized score $\bar
F_{norm}(G_{m-1})$ which is close to $\bar F_{norm}(G_m)$, since
$F_{m-1,j} = F_{mj}$ for many subgraphs, but this approach does not
guarantee that the graph $G_{m-1}$ with highest mean normalized score will
be found.  However, because we choose the edge $e_{ik}$ for which the
fewest subgraphs $S_{mj}^\ast$ are disconnected, and only need to call
BestSubgraph for those examples $D_j$ where removing $e_{ik}$ disconnects
$S_{mj}^\ast$, we are choosing the edge $e_{ik}$ which requires the
\emph{fewest} calls to BestSubgraph for each graph $G_m$.  Again,
correlation is used to break ties: if two edge removals $e_{ik}$
disconnect the same number of subgraphs, the edge with lower correlation
is removed.  We refer to this as Pseudo-Greedy Correlation (PsCorr), and we
show in \S\ref{comp_sec} that this approach reduces the number of calls
to BestSubgraph from $O(N^3)$ to $O(N\log N)$ as compared to exact greedy search.

In our empirical results below, we compare GrCorr and PsCorr to a simple implementation of
$\mathrm{BestEdge}(G_m,D)$, which we refer to as Correlation (Corr).
Corr chooses the next edge removal $e_{ik}$ to be the edge with the
lowest value of $\rho_{ik}$, and hence the greedy edge removal approach
corresponds to keeping all edges with correlation above some threshold
$\rho$.  Our empirical results, presented below, demonstrate that GrCorr
and PsCorr significantly improve timeliness and accuracy of event
detection as compared to Corr.

\subsection{Finding the Most Significant Graph}

\begin{figure}
\label{comp_score_dist}
\begin{center}
\includegraphics[scale=0.45]{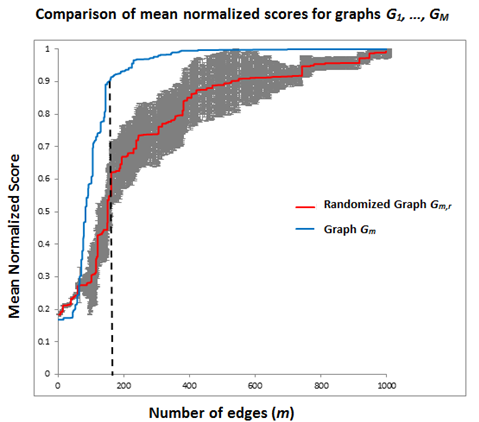}
\caption{Example of finding the most significant graph.  Blue
line: mean normalized score $\bar F_{norm}(G_m)$ for each graph $G_1
\ldots G_M$.  Red line and grey shadow: mean and standard deviation of
$\bar F_{norm}(G_{m,r})$ for randomized graphs with $m$ edges.  Dashed
line: most significant graph $G_m^\ast$.}
\end{center}
\vspace{-0.4cm}
\end{figure}

Our proposed graph structure learning approach considers a set of nested
graphs $\{G_1 \ldots G_M\}$, $M = \frac{N(N-1)}{2}$, where graph $G_m$ has $m$
edges and is formed by removing an edge from graph $G_{m+1}$.
We note that, for this set of graphs, $\bar F_{norm}(G_m)$ is
monotonically increasing with $m$, since the highest scoring connected
subgraph $S_{mj}^\ast$ for graph $G_m$ will also be connected for graph
$G_{m+1}$, and thus $F_{m+1,j} \ge F_{mj}$ for each training example
$D_j$.  Our goal is to identify the graph $G_m$ with the best tradeoff between a
high mean normalized score $\bar F_{norm}(G_m)$ and a small number of
edges $m$, as shown in Figure 1.  Our solution is to generate a large
number $R$ of \emph{random} permutations of the $M = \frac{N(N-1)}{2}$
edges of the complete graph on $N$ nodes.  For each permutation $r=1\ldots
R$, we form the sequence of graphs $G_{1,r} \ldots G_{M,r}$ by removing
edges in the given random order, and compute the mean normalized score of
each graph. For a given number of edges $m$, we compute the mean $\mu_m$
and standard deviation $\sigma_m$ of the mean normalized scores of the $R$
random graphs with $m$ edges.  Finally we choose the graph $G_m^\ast =
\arg\max_m \frac{\bar F_{norm}(G_m) - \mu_m}{\sigma_m}$.  This ``most
significant graph'' has the most anomalously high value of $\bar
F_{norm}(G_m)$ given its number of edges $m$.  Ideally, in order to compute
the most significant graph structure, we want to compare our mean normalized
score to the mean normalized score of any random graph with the same number of edges.  However, due to
the computational infeasibility of scoring all the random graph structures with varying number
of edges, we instead choose random permutations of edges to be removed.
\ignore{
Alternatively, we could potentially split the training dataset into training and
validation data.  We could use the training data to decide on which edge to remove
and we could use the validation data to generate the mean normalized score at
each edge removal.  We noticed in our experiments that this split tends to
reduce the effective number of training examples, thereby affecting the performance of
our learned graph structures (refer to \S\ref{num_train_performance} for more
details).  Further, we also performed additional experiments where we used twice as much
data (same amount of training data, plus additional examples for validation) and
there was not a huge improvement in performance of
our learned graph structures.  Hence, we used the same data to decide the order of
edge removals and to generate the mean normalized score.}

\subsection{Computational Complexity Analysis}
\label{comp_sec}

We now consider the computational complexity of each step of our graph
structure learning framework (Alg.~1), in terms of the number of
nodes $N$, number of training examples $J$, and number of randomly
generated sequences $R$.  Step 1 (computing correlations) requires $O(J)$
time for each of the $O(N^2)$ pairs of nodes.  Step 2 (computing the
highest-scoring unconstrained subsets) requires $O(N \log N)$ time for
each of the $J$ training examples, using the linear-time subset scanning
method~\citep{neill-ltss} for efficient computation.  Steps 5-10 are
repeated $O(N^2)$ times for the original sequence of edges and $O(N^2)$
times for each of the $R$ randomly generated sequences of edges.  Within
the loop, the computation time is dominated by steps 5 and 7, and depends
on our choice of $\mathrm{BestSubgraph}(G,D)$ and $\mathrm{BestEdge}(G,D)$.

For each call to BestSubgraph, GraphScan requires worst-case exponential
time, approximately $O(1.2^N)$ based on empirical results by~\citet{speakman14},
while the faster, heuristic ULS method
requires only $O(N^2)$ time.  In step 7, BestSubgraph could be called up
to $J$ times for each graph structure, for each of the $R$ randomly
generated sequences of edge removals, resulting in a total of $O(JRN^2)$ calls.
However, BestSubgraph is only called when the removal of an edge $e_{ik}$
disconnects the highest scoring connected subgraph $S_{mj}^\ast$ for that graph
$G_m$ and training example $D_j$.  We now consider the sequence of edge
removals for graphs $G_1 \ldots G_M$, where $M=\frac{N(N-1)}{2}$, and
compute the expected number of calls to BestSubgraph for these $O(N^2)$
edge removals.  We focus on the case of random edge removals, since these
dominate the overall runtime for large $R$.

For a given training example $D_j$, let $x_m$ denote the number of nodes
in the highest-scoring connected subgraph $S_{mj}^\ast$ for graph $G_m$, and
let $T_m$ denote any spanning tree of $S_{mj}^\ast$.  We note that the number
of edges in $T_m$ is $x_m - 1$, which is $O(\min(N,m))$.  Moreover, any
edge that is not in $T_m$ will not disconnect $S_{mj}^\ast$, and thus the
probability of disconnecting $S_{mj}^\ast$ for a random edge removal is upper
bounded by the ratio of the number of disconnecting edges $O(\min(N,m))$
to the total number of edges $m$.  Thus the expected number of calls to
BestSubgraph for graphs $G_1 \ldots G_M$ for the given training example is
$\sum_{m = 1 \ldots M} \frac{O(\min(N,m))}{m}$ = $O(N) + \sum_{m = N
\ldots M} \frac{O(N)}{m}$ = $O(N) + O(N) \sum_{m = N \ldots M}
\frac{1}{m}$ = $O(N \log N)$.  Hence the expected number of calls to
BestSubgraph needed for all $J$ training examples is $O(JN \log N)$ for
the given sequence of graphs $G_1 \ldots G_M$, and $O(JRN \log N)$ for
the $R$ random sequences of edge removals.

Finally, we consider the complexity of choosing the next edge to remove
(step 5 of our graph structure learning framework).  The BestEdge function
is called $O(N^2)$ times for the given sequence of graphs $G_1 \ldots
G_M$, but is not called for the $R$ random sequences of edge removals.
For the GrCorr and PsCorr methods, for each graph $G_m$ and each training
example $D_j$, we must evaluate all $O(m)$ candidate edge removals.  This
requires a total of $O(JN^4)$ checks to determine whether removal of each
edge $e_{ik}$ disconnects the highest scoring connected subgraph $S_{mj}^\ast$
for each graph $G_m$ and training example $D_j$.  The GrCorr method must
also call BestSubgraph whenever the highest scoring subgraph is
disconnected.  However, for a given graph $G_m$ and training example
$D_j$, we show that only $O(N)$ of the $O(m)$ candidate edge removals can
disconnect the highest scoring subset, thus requiring only $O(JN^3)$ calls
to BestSubgraph rather than $O(JN^4)$. To see this, let $x_m$ be the
number of nodes in the highest-scoring connected subgraph $S_{mj}^\ast$, and
let $T_m$ be any spanning tree of $S_{mj}^\ast$.  Then any edge that is not in
$T_m$ will not disconnect $S_{mj}^\ast$, and $T_m$ only has $x_m - 1 = O(N)$ edges.

\subsection{Consistency of Greedy Search}

The greedy algorithm described above is not guaranteed to recover the true graph structure $G_T$.  However, we can show that, given a sufficiently strong and homogeneous signal, and sufficiently
many training examples, the true graph will be part of the sequence of graphs $G_0 \ldots G_M$ identified by the greedy search procedure.  More precisely, let us make assumptions (A1) and (A2)
given in \S3 above.  We also assume that GraphScan (GS) or Upper Level Sets (ULS) is used for BestSubgraph, and that Greedy Correlation (GrCorr) or Pseudo-Greedy
Correlation (PsCorr) is used for selecting the next edge to remove (BestEdge).  Given these assumptions, we can show:

\begin{theorem}
\label{second_theorem}
If the signal is $\alpha_j$-homogeneous and $\frac{\beta_j}{\eta_j}$-strong for all training examples $D_j \sim \mathbf{D}$, and if the set of training examples
$D_1 \ldots D_J$ is sufficiently large, then the true graph $G_T$ will be part of the sequence of
graphs $G_0 \ldots G_M$ identified by Algorithm 1.
\end{theorem}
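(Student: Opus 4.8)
The plan is to show that the greedy edge-removal process strips off every \emph{extra} edge (one not in $G_T$) before it ever removes a \emph{true} edge (one in $G_T$); once all extra edges are gone, the current graph has exactly the edge set of $G_T$ and therefore equals $G_T$. Concretely, I would prove by downward induction on $m$ the invariant ``$G_T \setminus G_m = \emptyset$'' for every $m$ from $M = \tfrac{N(N-1)}{2}$ down to $|E(G_T)|$. The base case $m=M$ is immediate since $G_M$ is complete. For the inductive step, assume $G_T \setminus G_m = \emptyset$ with $m > |E(G_T)|$, so that at least one extra edge remains in $G_m$; the goal is to show that $\mathrm{BestEdge}(G_m,D)$ — whether GrCorr or PsCorr — selects an extra edge, so that the invariant passes to $G_{m-1}$. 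After exactly $M - |E(G_T)|$ such steps the sequence reaches $G_{|E(G_T)|} = G_T$.

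Two preliminary facts feed the inductive step. First, by Lemmas~\ref{lemma_superset} and~\ref{lemma_subset} together with the signal assumptions, $S_j^\ast = S_j^T$ for every training example, and, exactly as in the proof of Theorem~\ref{first_theorem}, $S_j^\ast$ is the \emph{unique} unconstrained maximizer of $F$. Second, by assumption (A1), for every edge $e=(u,v)\in E(G_T)$ the event $\{S_j^T = \{u,v\}\}$ has positive probability under $\mathbf{D}$; since $G_T$ has only finitely many edges, the probability that the training set $D_1,\ldots,D_J$ contains at least one ``witness'' example realizing each such event tends to $1$ as $J\to\infty$, and I would condition on this event — this is the precise meaning of ``sufficiently large.'' I would also record that, as long as only extra edges have been removed so far, no maintained subgraph $S_{mj}^\ast$ has ever been disconnected (since $S_{mj}^\ast = S_j^T$ is connected using only edges of $G_T \subseteq G_m$, and deleting an edge outside $G_T$ cannot disconnect such a subgraph), so $S_{mj}^\ast = S_j^\ast$ and $F_{mj}=F_j$ for all $j$, and in particular no call to BestSubgraph has yet been triggered, which is why the argument is insensitive to whether GS or ULS implements BestSubgraph.

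Now fix $G_m$ as in the inductive hypothesis. For GrCorr: removing an extra edge leaves $G_T\setminus G_{m-1}=\emptyset$, hence keeps every $S_{mj}^\ast$ connected, so $F_{m-1,j}=F_j$ for all $j$ and $\bar F_{norm}(G_{m-1}) = 1$, the maximum possible value. Removing a true edge $e=(u,v)$ disconnects the two-node subgraph $\{u,v\}$ in $G_{m-1}$ — a two-node set is connected iff its endpoints are adjacent, so alternate $u$–$v$ paths through extra edges are irrelevant — and for the witness example with $S_j^T=\{u,v\}$ every connected subgraph of $G_{m-1}$ has score strictly below $F(\{u,v\})=F_j$ by uniqueness of the unconstrained optimum, so the BestSubgraph output gives $F_{m-1,j} < F_j$ and hence $\bar F_{norm}(G_{m-1}) < 1$. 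Thus GrCorr strictly prefers an extra edge, and if several extra edges tie at the maximal value $1$, the correlation tie-break still selects one of them. For PsCorr the identical dichotomy holds at the level of disconnection counts: removing an extra edge disconnects none of the $S_{mj}^\ast$, whereas removing a true edge $e=(u,v)$ disconnects at least the witness subgraph $\{u,v\}$, so PsCorr again strictly prefers an extra edge. In both cases $G_T\setminus G_{m-1}=\emptyset$, completing the induction and hence the proof.

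I expect the main obstacle to be the second preliminary fact and its interaction with the tie-breaking rule. The consistency claim genuinely fails for small $J$: a true edge whose only certifying witnesses happen to be absent from the sample is indistinguishable from an extra edge to both GrCorr (it too keeps $\bar F_{norm}=1$) and PsCorr (it too disconnects nothing), so the argument must isolate the finite family of positive-probability two-node events that certify each true edge and force $J$ large enough to capture all of them at once. Intertwined with this is the subtle structural point that makes the two-node events the \emph{right} certificates — that deleting a true edge $(u,v)$ really does disconnect $\{u,v\}$ in $G_m$ even though $G_m$ may contain alternate $u$–$v$ paths — which is what links the combinatorics of edge removal to the optimality statement of Theorem~\ref{first_theorem}. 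The remaining pieces — the downward induction, the reduction to Theorem~\ref{first_theorem} for GrCorr, and the robustness to the choice between GraphScan and ULS — are then routine.
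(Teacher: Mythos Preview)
Your proposal is correct and follows essentially the same approach as the paper: a downward-induction argument showing that GrCorr and PsCorr each remove every edge of $G_M\setminus G_T$ before touching any edge of $G_T$, using the two-node witness examples guaranteed by assumption (A1) for ``sufficiently large'' $J$ and the uniqueness of the unconstrained optimum from Theorem~\ref{first_theorem}. Your treatment is if anything slightly more careful than the paper's on two points---making explicit that a two-node set $\{u,v\}$ is connected iff the edge $(u,v)$ is present (so alternate paths are irrelevant), and noting that the argument only needs BestSubgraph to return \emph{some} connected subgraph score, which is automatically strictly below $F_j$ by uniqueness---whereas the paper additionally asserts that ULS finds the exact connected optimum under the signal hypotheses.
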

\begin{proof}

Given an $\alpha_j$-homogeneous and $\frac{\beta_j}{\eta_j}$-strong signal, both GS and ULS will correctly identify the highest-scoring
connected subgraph $S_{mj}^\ast$.  This is true for GS in general, since an exact search is performed, and also true for ULS since $S_{mj}^\ast$ will be one of
the upper level sets considered.  Now let $m_T$ denote the number of edges in the true graph
$G_T$, and consider the sequence of graphs $G_M$, $G_{M-1}$, \ldots, $G_{m_T+1}$ identified by the greedy search procedure. For each of these graphs $G_m$, the next edge to be removed (producing
graph $G_{m-1}$) will be either an edge in $G_T$ or an edge in $G_M \setminus G_T$.  We will show that an edge in $G_M \setminus G_T$ is chosen for removal at each step.  Given assumptions (A1)-(A2) and an $\alpha_j$-homogeneous and $\frac{\beta_j}{\eta_j}$-strong signal, Theorem 1 implies:

a) For any graph that contains all edges of the true graph ($G_T \setminus G_m = \emptyset$), we will have $S_{mj}^\ast = S_j^\ast =
S_j^T$ for all $D_j \sim \mathbf{D}$, and thus $\bar F_{norm}(G_m) = 1$.

b) For any graph that does not contain all edges of the true graph, and for any training example $D_j$ drawn from $\mathbf{D}$, there
is a non-zero probability that we will have $S_{mj}^\ast \ne S_j^\ast$, $F_{mj} < F_j$, and thus $\bar F_{norm}(G_m) < 1$.

We further assume that the set of training examples is sufficiently large so that every pair of nodes $\{v_1,v_2\}$ in $G_T$ is the affected subgraph for at least one training example $D_j$;
note that assumption (A1) ensures that each such pair will be drawn from $\mathbf{D}$ with non-zero probability.  This means that removal of any edge in $G_T$ will disconnect $S_{mj}^\ast$ for
at least one training example $D_j$, leading to $S_{(m-1)j}^\ast \ne S_{mj}^\ast$ and $\bar F_{norm}(G_{m-1}) < \bar F_{norm}(G_m)$, while removal of any edge in $G_M \setminus G_T$ will not
disconnect $S_{mj}^\ast$ for any training examples, maintaining $\bar F_{norm}(G_{m-1}) = \bar F_{norm}(G_m)$.  Hence for both GrCorr, which removes the edge that maximizes $\bar
F_{norm}(G_{m-1})$, and PsCorr, which removes the edge that disconnects $S_{mj}^\ast$ for the fewest training examples, the greedy search procedure will remove all edges in $G_M \setminus G_T$ before removing any edges in $G_T$, leading to $G_{m_T} = G_T$.
\end{proof}

\section{Related Work}

We now briefly discuss several streams of related work.  As noted above,
various spatial scan methods have been proposed for detecting the most
anomalous subset in data with an underlying, known graph structure,
including Upper Level Sets~\citep{patil04}, FlexScan~\citep{tango05}, and
GraphScan~\citep{speakman14}, but none of these methods attempt to learn
an unknown graph structure from data.  Link prediction algorithms such
as~\citep{Taskar:2003:LPR, Vert:2005:SGI} start with an existing network
of edges and attempt to infer additional edges which might also be
present, unlike our scenario which requires inferring the complete edge
structure.  Much work has been done on learning the edge structure of
graphical models such as Bayesian networks and probabilistic relational
models~\citep{Getoor:2003:LPM}, but these methods focus on understanding
the dependencies between multiple attributes rather than learning a
graph structure for event detection. Finally, the recently proposed
NetInf~\citep{Gomez-Redriguez:2010:NetInf},
ConNIe~\citep{Myers:2010:LSNI}, and
MultiTree~\citep{Gomez-Redriguez:2012:MultiTree} methods share our goal
of efficiently learning graph structure.  NetInf is a submodular
approximation algorithm for predicting the latent network structure and
assumes that all connected nodes influence their neighbors with equal
probability. ConNIe relaxes this
assumption and uses convex programming to rapidly infer the optimal
latent network, and MultiTree is an extension of NetInf which considers
all possible tree structures instead of only the most probable ones. The
primary difference of the present work from NetInf, ConNIe, and
MultiTree is that we learn the underlying graph structure from
\emph{unlabeled} data: while these methods are given the affected subset
of nodes for each time step of an event, thus allowing them to learn the
network edges along which the event spreads, we consider the more
difficult case where we are given only the observed and expected counts
at each node, and the affected subset of nodes is not labeled. Further,
these methods are not targeted towards learning a graph structure for
event detection, and we demonstrate below that our approach achieves
more timely and accurate event detection than MultiTree, even when
MultiTree has access to the labels.

\section{Experimental Setup}

In our general framework, we implemented two methods for
$\mathrm{BestSubgraph}(G,D)$: GraphScan (GS) and Upper Level Sets (ULS).
We also implemented three methods for $\mathrm{BestEdge}(G,D)$: GrCorr,
PsCorr, and Corr.  However, using GraphScan with the true greedy method
(GS-GrCorr) was computationally infeasible for our data, requiring 3
hours of run time for a single 50-node graph, and failing to complete
for larger graphs.  Hence our evaluation compares five combinations of
BestSubgraph and BestEdge: GS-PsCorr, GS-Corr, ULS-GrCorr, ULS-PsCorr,
and ULS-Corr.

We compare the performance of our learned graphs with the learned graphs
from MultiTree, which was shown to outperform previously proposed graph
structure learning algorithms such as NetInf and
ConNIe~\citep{Gomez-Redriguez:2012:MultiTree}.  We used the publicly
available implementation of the algorithm, and considered both the case
in which MultiTree is given the true labels of the affected subset of
nodes for each training example (MultiTree-Labels), and the case in
which these labels are not provided (MultiTree-NoLabels).  In the latter
case, we perform a subset scan for each training example $D_j$, and use
the highest-scoring unconstrained subset $S_j^\ast$ as an approximation of
the true affected subset.

\subsection{Description of Data}

Our experiments focus on detection of simulated disease outbreaks
injected into real-world Emergency Department (ED) data from ten
hospitals in Allegheny County, Pennsylvania.  The dataset consists of
the number of ED admissions with respiratory symptoms for each of the
$N=97$ zip codes for each day from January 1, 2004 to December 31, 2005.
The data were cleaned by removing all records where the admission date
was missing or the home zip code was outside the county. The resulting
dataset had a daily mean of 44.0 cases, with a standard deviation of 12.1.

\subsection{Graph-Based Outbreak Simulations}

Our first set of simulations assume that the disease outbreak starts at a randomly
chosen location and spreads over some underlying graph structure,
increasing in size and severity over time.  We assume that an affected
node remains affected through the outbreak duration, as in the
Susceptible-Infected contagion model~\citep{Bailey:1975:InfectionModels}.
For each simulated outbreak, we first choose a center zip code uniformly
at random, then order the other zip codes by graph distance (number of
hops away from the center for the given graph structure), with ties broken
at random.  Each outbreak was assumed to be 14 days in duration.  On each
day $d$ of the outbreak ($d=1 \ldots 14$), we inject counts into the $k$
nearest zip codes, where $k = SpreadRate \times d$, and $SpreadRate$ is a
parameter which determines how quickly the inject spreads.  For each
affected node $v_i$, we increment the observed count $c_i^t$ by
$\mbox{Poisson}(\lambda_i^t)$, where $\lambda_i^t = \frac{SpreadFactor
\times d}{SpreadFactor+\log (dist_i+1)}$, and $SpreadFactor$ is a
parameter which determines how quickly the inject severity decreases with
distance.  The assumption of Poisson counts is common in
epidemiological models of disease spread; the expected number of
injected cases $\lambda_i^t$ is an increasing function of the inject day
$d$, and a decreasing function of the graph distance between the affected
node and the center of the outbreak.  We considered 4 different inject
types, as described below; for each type, we generated $J = 200$ training
injects (for learning graph structure) and an additional 200 test injects
to evaluate the timeliness and accuracy of event detection given the
learned graph.

\subsubsection{Zip code adjacency graph based injects}

We first considered simulated outbreaks which spread from a given zip code
to spatially adjacent zip codes, as is commonly assumed in the literature.
Thus we formed the \emph{adjacency graph} for the 97 Allegheny County zip
codes, where two nodes are connected by an edge if the corresponding zip
codes share a boundary.  We performed two sets of experiments: for the
first set, we generated simulated injects using the adjacency graph, while
for the second set, we added additional edges between randomly chosen
nodes to simulate travel patterns.  As noted above, a contagious disease
outbreak might be likely to propagate from one location to another
location which is not spatially adjacent, based on individuals' daily
travel, such as commuting to work or school.  We hypothesize that
inferring these additional edges will lead to improved detection
performance.

\ignore{\subsubsection{Spatial injects}

An alternative assumption is that a disease outbreak affects all areas
within some spatial radius of the outbreak center. This is commonly
assumed in the literature for non-contagious outbreaks with a point
source, such as cancer clusters resulting from a radiation leak, or other
patterns of illness resulting from environmental
exposures~\citep{kulldorff97a}.  To simulate these ``spatial injects'', we
do not assume an underlying graph structure, but instead assume that an
outbreak spreads to the $k$-nearest neighbors of the center zip code based
on Euclidean distance between the zip code centroids, as described above.}

\subsubsection{Random graph based injects}

Further, in order to show that we can learn a diverse set of graph structures
over which an event spreads, we performed experiments assuming two types
of random graphs, Erdos-Renyi and preferential attachment.  For each
experiment, we used the same set of nodes $V$ consisting of the 97
Allegheny County zip codes, but created a random set of edges $E$ connecting
these nodes; the graph $G=(V,E)$ was then used to simulate 200 training
and 200 test outbreaks, with results averaged over multiple such
randomly chosen graphs.

First, we considered
\emph{Erdos-Renyi graphs} (assuming that each pair of nodes is connected
with a constant probability $p$), with edge probabilities $p$ ranging from
0.08 to 0.20.  The relative performance of methods was very
similar across different $p$ values, and thus only the averaged results
are reported.  Second, we considered \emph{preferential attachment
graphs}, scale-free network graphs which are constructed by adding nodes
sequentially, assuming that each new node forms an edge to each existing
node with probability proportional to that node's degree.  We generated
the preferential attachment graph by first connecting three randomly
chosen nodes, then adding the remaining nodes in a random order. Each
new node that arrives attaches itself to each existing node $v_j$ with
probability $\frac{deg(v_j)}{\sum_i deg(v_i)}$, where each node's maximum
degree was restricted to $0.2 \times |V|$.

\ignore{Finally, we
generated random graphs spanning the continuum between
preferential attachment and Erdos-Renyi graphs.  To do so, we follow the
same sequential procedure as for the preferential attachment graphs, but
each new node that arrives attaches itself to each existing node $v_j$
with probability $\alpha \frac{deg(v_j)}{\sum_i
deg(v_i)} + (1-\alpha)p$.  The parameter $\alpha$ controls the
level of preferential attachment, with $\alpha = 1$ corresponding to a
preferential attachment graph and  $\alpha = 0$ corresponding
to an Erdos-Renyi graph with edge probability $p$.}

\subsection{Simulated Anthrax Bio-Attacks}

We present additional evaluation results for one potentially realistic
outbreak scenario, an increase in respiratory Emergency Department cases
resulting from an airborne release of anthrax spores (e.g.~from a
bio-terrorist attack).  The anthrax attacks are based on a state-of-the-art,
highly realistic simulation of an aerosolized anthrax release, the
Bayesian Aerosol Release Detector (BARD) simulator ~\citep{hogan07}.  BARD
uses a combination of a dispersion model (to determine which areas will
be affected and how many spores people in these areas will be exposed to),
an infection model (to determine who will become ill with anthrax and
visit their local Emergency Department),and a visit delay model to calculate
the probability of the observed Emergency Department visit counts over
a spatial region.  These complex simulations take into account weather data
when creating the affected zip codes and demographic information
when calculating the number of additional Emergency Department cases within
each affected zip code.  The weather patterns are modeled with Gaussian
plumes resulting in elongated, non-circular regions of affected zip codes.  Wind
direction, wind speed, and atmospheric stability all influence the shape
and size of the affected area.  A total of 82 simulated anthrax attacks were
generated and injected into the Allegheny County Emergency Department
data, using the BARD model.  Each simulation generated
between 33 and 1324 cases in total (mean = 429.2, median = 430) over a
ten-day outbreak period; half of the attacks were used for training and
half for testing.

\section{Experimental Results}

\subsection{Computation Time}

For each of the experiments described above (adjacency, adjacency plus travel
patterns, Erdos-Renyi random graphs, and preferential attachment graphs), we report
the average computation time required for each of our methods (Table~1).
Randomization testing is not included in these results, since it is not dependent on
the choice of BestEdge.  Each sequence of randomized edge removals $G_{1,r}, \ldots, G_{M,r}$
required 1 to 2
hours for the GraphScan-based methods and 1 to 3 minutes for the ULS-based methods.
\begin{table} \begin{center}
\caption{Average run time in minutes for each learned graph structure, for $N=97$
nodes.}\vspace{1mm}
{\tiny
\begin{tabular}{|c|c|c|c|c|c|c|c|}\hline
Experiment & \multicolumn{2}{c|}{GraphScan (GS)} & \multicolumn{3}{c|}{ULS}
& \multicolumn{2}{c|}{MultiTree}\\
& \multicolumn{1}{c}{PsCorr}& Corr & \multicolumn{1}{|c}{GrCorr} &
\multicolumn{1}{c}{PsCorr} & Corr &\multicolumn{1}{c}{Labels} & NoLabels\\\hline
Adjacency & 41 & 38 & 13 & 2 & 1 & $<$1 & $<$1\\
Adjacency+Travel & 53 & 47 & 15 & 3 & 1 & $<$1 & $<$1\\
Erdos-Renyi (avg) & 93 & 89 & 22 & 6 & 3 & $<$1 & $<$1\\
Pref. Attachment & 49 & 44 & 17 & 3 & 1 & $<$1 & $<$1\\
\hline \end{tabular}
}
\vspace{-0.4cm}
\end{center} \end{table}
\begin{table} \begin{center} \caption{Average run time in minutes for each learned
graph structure, for Erdos-Renyi graphs with varying numbers of nodes $N$.}
{\tiny
\begin{tabular}{|c|c|c|c|c|c|c|c|}\hline Size & \multicolumn{2}{c|}{GraphScan (GS)} & \multicolumn{3}{c|}{ULS}
& \multicolumn{2}{c|}{MultiTree}\\
& \multicolumn{1}{c}{PsCorr}& Corr & \multicolumn{1}{|c}{GrCorr} &
\multicolumn{1}{c}{PsCorr} & Corr &\multicolumn{1}{c}{Labels} & NoLabels\\\hline
N=50 & 2 & 2 & 1 & $<$1 & $<$1 & $<$1 & $<$1\\
N=75 & 37 & 32 & 3 & 1 & $<$1 & $<$1 & $<$1\\
N=100 & 58 & 53 & 13 & 3 & $<$1 & $<$1 & $<$1\\
N=200 & - & - & 91 & 33 & 1 & 1 & 1\\
N=500 & - & - & 2958 & 871 & 27 & 2 & 2\\
\hline \end{tabular}
}
\vspace{-0.4cm}
\end{center} \end{table}

For each of the $J=200$ training examples, all methods except for ULS-GrCorr
required fewer than 80 calls to BestSubgraph on average to search over the space of
$M=4,656$ graph structures, a reduction of nearly two orders of magnitude as
compared to the naive approach of calling BestSubgraph for each combination of graph
structure and training example.  Similarly, a naive implementation of the true
greedy search would require approximately 11 million calls to BestSubgraph for each
training example, while our ULS-GrCorr approach required only $\sim$5000 calls per
training example, a three order of magnitude speedup.  As expected, ULS-Corr and
ULS-PsCorr had substantially faster run times than GS-Corr and GS-PsCorr, though the
GraphScan-based approaches were still able to learn each graph structure in less
than two hours.

Next, in order to evaluate how each method scales with the number of nodes $N$, we
generated Erdos-Renyi random graphs with edge probability $p=0.1$ and $N$ ranging
from 50 to 500.  For each graph, we generated simulated counts and baselines, as
well as simulating injects to produce $J=200$ training examples for learning the
graph structure.  Table 2 shows the average time in minutes required by each method
to learn the graph structure.  We observe that the ULS-based methods were
substantially faster than the GraphScan-based methods, and were able to scale to
graphs with $N=500$ nodes, while GS-Corr and GS-PsCorr were not computationally
feasible for $N \ge 200$.  We note that MultiTree has much lower computation
time as compared to our graph learning methods, since it is not dependent on calls
to a graph-based event detection method (BestSubgraph); however, its detection performance
is lower, as shown below in our experiments.

\subsection{Comparison of True and Learned Graphs}

For each of the four graph-based injects (adjacency, adjacency plus
travel patterns, Erdos-Renyi, and preferential attachment), we compare
the learned graphs to the true underlying graph over which the simulated
injects spread.  Table 3 compares the number of edges in the true
underlying graph to the number of edges in the learned graph structure
for each of the methods, and Tables 4 and 5 show the precision and recall of
the learned graph as compared to the true graph.  Given the true set of
edges $E^T$ and the learned set of edges $E^\ast$, the edge
precision and recall are defined to be $\frac{|E^\ast \cap
E^T|}{|E^\ast|}$ and $\frac{|E^\ast \cap E^T|}{|E^T|}$
respectively.  High recall means that the learned graph structure
identifies a high proportion of the true edges, while high precision
means that the learned graph does not contain too many irrelevant edges.
We observe that GS-PsCorr had the highest recall, with nearly identical
precision to GS-Corr and ULS-GrCorr.  MultiTree had higher precision and
comparable recall to GS-PsCorr when it was given the true labels, but 3-5\%
lower precision and recall when the labels were not provided.
\begin{table}[t]
\begin{center} \caption{Comparison of true and learned number of
edges $m$.}
{\tiny
\begin{tabular}{|c|c|c|c|c|c|c|c|c|}\hline
Experiment & Edges & \multicolumn{7}{c|}{Learned Edges} \\
& (true) & \multicolumn{2}{c|}{GraphScan (GS)} &
\multicolumn{3}{c|}{ULS} & \multicolumn{2}{c|}{MultiTree}\\
& & \multicolumn{1}{c}{PsCorr}& Corr & \multicolumn{1}{|c}{GrCorr} &
\multicolumn{1}{c}{PsCorr} & Corr &\multicolumn{1}{c}{Labels} & NoLabels\\\hline
Adjacency & 216 & 319 & 297 & 305 & 332& 351 & 280& 308\\
Adjacency+Travel & 280 & 342 & 324 & 329 & 362& 381 & 316& 342\\
Erdos-Renyi ($p=0.08$) & 316 & 388 & 369 & 359& 398& 412& 356& 382\\
Pref. Attachment & 374 & 394 & 415 & 401 & 428& 461& 399& 416 \\ \hline
\end{tabular}
}
\vspace{-0.4cm}
\end{center}
\label{edges_table}
\end{table}
\begin{table}[H]
\begin{center} \caption{Comparison of edge precision for learned graphs.}
{\tiny
\begin{tabular}{|c|c|c|c|c|c|c|c|}\hline
Experiment & \multicolumn{7}{c|}{Precision} \\
& \multicolumn{2}{c|}{GraphScan (GS)} & \multicolumn{3}{c|}{ULS}
&\multicolumn{2}{c|}{MultiTree} \\
& \multicolumn{1}{c}{PsCorr} & \multicolumn{1}{c|}{Corr}
& \multicolumn{1}{c}{GrCorr} & \multicolumn{1}{c}{PsCorr} &
\multicolumn{1}{c|}{Corr}
& \multicolumn{1}{c}{Labels} & NoLabels \\ \hline
Adjacency & 0.60 & 0.62 & 0.62 & 0.53 & 0.50& 0.66 & 0.58\\
Adjacency+Travel & 0.70 & 0.71 & 0.69 & 0.60& 0.52 & 0.75& 0.65 \\
Erdos-Renyi (avg) & 0.56 & 0.59 & 0.61& 0.59& 0.54 & 0.62& 0.56 \\
Pref. Attachment & 0.83 & 0.79 & 0.80 & 0.69 & 0.59 & 0.86& 0.80 \\ \hline
\end{tabular}
}
\vspace{-0.4cm}
\end{center}
\label{edge_prec_table}
\end{table}
\begin{table}[H]
\begin{center} \caption{Comparison of edge recall for learned graphs.}
{\tiny
\begin{tabular}{|c|c|c|c|c|c|c|c|}\hline
Experiment & \multicolumn{7}{c|}{Recall} \\
& \multicolumn{2}{c|}{GraphScan (GS)} & \multicolumn{3}{c|}{ULS}
&\multicolumn{2}{c|}{MultiTree} \\
& \multicolumn{1}{c}{PsCorr} & \multicolumn{1}{c|}{Corr}
& \multicolumn{1}{c}{GrCorr} & \multicolumn{1}{c}{PsCorr} &
\multicolumn{1}{c|}{Corr}
& \multicolumn{1}{c}{Labels} & NoLabels \\ \hline
Adjacency & 0.89 & 0.86 & 0.88 & 0.81&0.77 & 0.86& 0.83\\
Adjacency+Travel & 0.86 & 0.83 & 0.81 & 0.77 &0.71 & 0.85& 0.79\\
Erdos-Renyi (avg) & 0.87 & 0.81 & 0.83 & 0.79& 0.70 & 0.84& 0.79\\
Pref. Attachment &0.88& 0.81 & 0.86 & 0.79 & 0.73 & 0.91 & 0.89\\ \hline
\end{tabular}
}
\vspace{-0.4cm}
\end{center}
\label{edge_recall_table}
\end{table}

\subsection{Comparison of Detection Performance}
\label{DetPerf}

We now compare the detection performance of the learned graphs on the test data: a
separate set of 200 simulated injects (or $41$ injects for the BARD anthrax simulations), generated from the same distribution as the
training injects which were used to learn that graph. To evaluate a graph, we use
the GraphScan algorithm (assuming the given graph structure) to identify the
highest-scoring connected subgraph $S$ and its likelihood ratio score $F(S)$ for
each day of each simulated inject, and for each day of the original Emergency
Department data with no cases injected.  We note that performance was substantially
improved by using GraphScan for detection as compared to ULS, regardless of whether
GraphScan or ULS was used to learn the graph, and GraphScan required less than a few
seconds of run time for detection per day of the ED data.

We then evaluate detection performance using two metrics: average time
to detection (assuming a false positive rate of 1 fp/month, typically
considered acceptable by public health), and spatial accuracy (overlap
between true and detected clusters).  To compute detection time, we
first compute the score threshold $F_{thresh}$ for detection at 1
fp/month.  This corresponds to the 96.7th percentile of the daily scores
from the original ED data.  Then for each simulated inject, we compute
the first outbreak day $d$ with $F(S) > F_{thresh}$; for this computation,
undetected outbreaks are counted as $14$ days (maximum number of inject days)
to detect.  We then average
the time to detection over all 200 test injects.  To evaluate spatial
accuracy, we compute the average overlap coefficient between the
detected subset of nodes $S^\ast$ and the true affected subset
$S^T$ at the midpoint (day 7) of the outbreak, where overlap is
defined as $\frac{|S^\ast \cap S^T|}{|S^\ast \cup S^T|}$.

As noted above, detection performance is often improved by including a proximity constraint, where we perform separate searches over the ``local neighborhood'' of each of the $N$
graph nodes, consisting of that node and its $k-1$ nearest neighbors, and report the highest-scoring connected subgraph over all neighborhoods.  We compare the detection
performance of each graph structure by running GraphScan with varying neighborhood sizes $k=5,10,\ldots,45$ for each outbreak type.

\subsubsection{Results on zip code adjacency graphs}

We first evaluate the detection time and spatial accuracy of GraphScan,
using the learned graphs, for simulated injects which spread based on
the adjacency graph formed from the 97 Allegheny County zip codes, as
shown in Figure~\ref{AdjGraphInject}.  This figure also shows the
performance of GraphScan given the true zip code adjacency graph.  We
observe that the graphs learned by GS-PsCorr and ULS-GrCorr have similar
spatial accuracy to the true zip code adjacency graph, as measured by
the overlap coefficient between the true and detected subsets of nodes,
while the graphs learned by GS-Corr and MultiTree have lower spatial
accuracy.  Surprisingly, all of the learned graphs achieve more timely
detection than the true graph: for the optimal neighborhood size of
$k=30$, ULS-GrCorr and GS-PsCorr detected an average of 1.4 days faster
than the true graph.  This may be because the learned graphs, in
addition to recovering most of the edges of the adjacency graph, also
include additional edges to nearby but not spatially adjacent nodes
(e.g.~neighbors of neighbors).  These extra edges provide added
flexibility to consider subgraphs which would be almost but not quite
connected given the true graph structure.  This can improve detection
time when some nodes are more strongly affected than others, enabling
the strongly affected nodes to be detected earlier in the outbreak
before the entire affected subgraph is identified.  Finally, ULS-GrCorr
and GS-PsCorr detected 0.6 days faster than MultiTree for $k=30$.

 \begin{figure}[t]
 \begin{center}
  \begin{tabular}{cc}
   \includegraphics[scale=0.23]{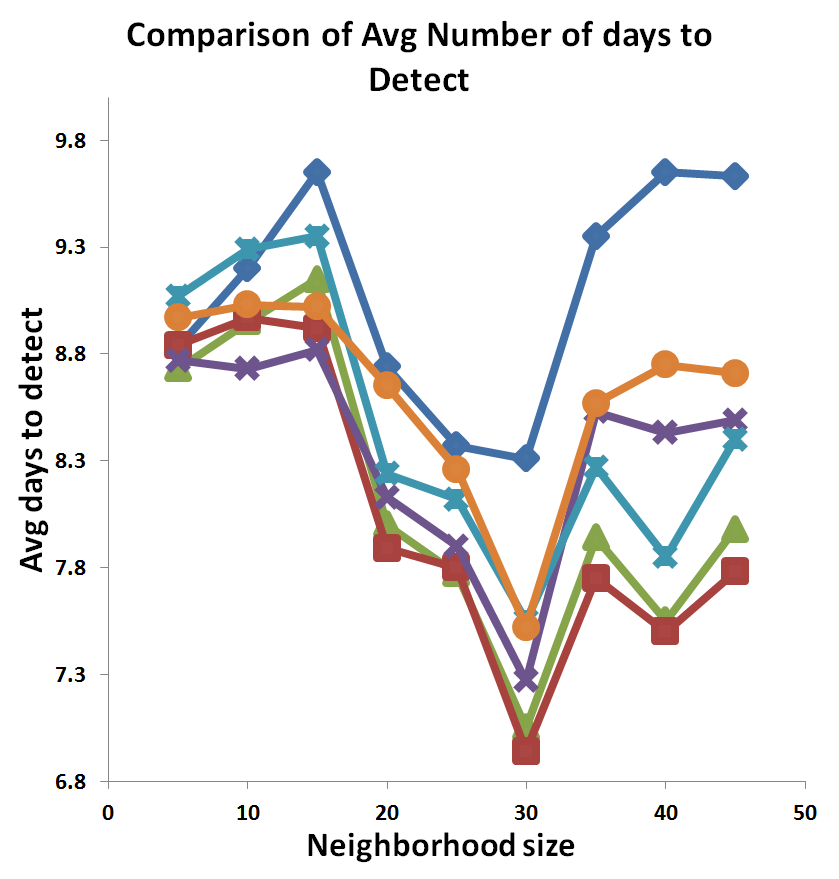}&
   \includegraphics[scale=0.23]{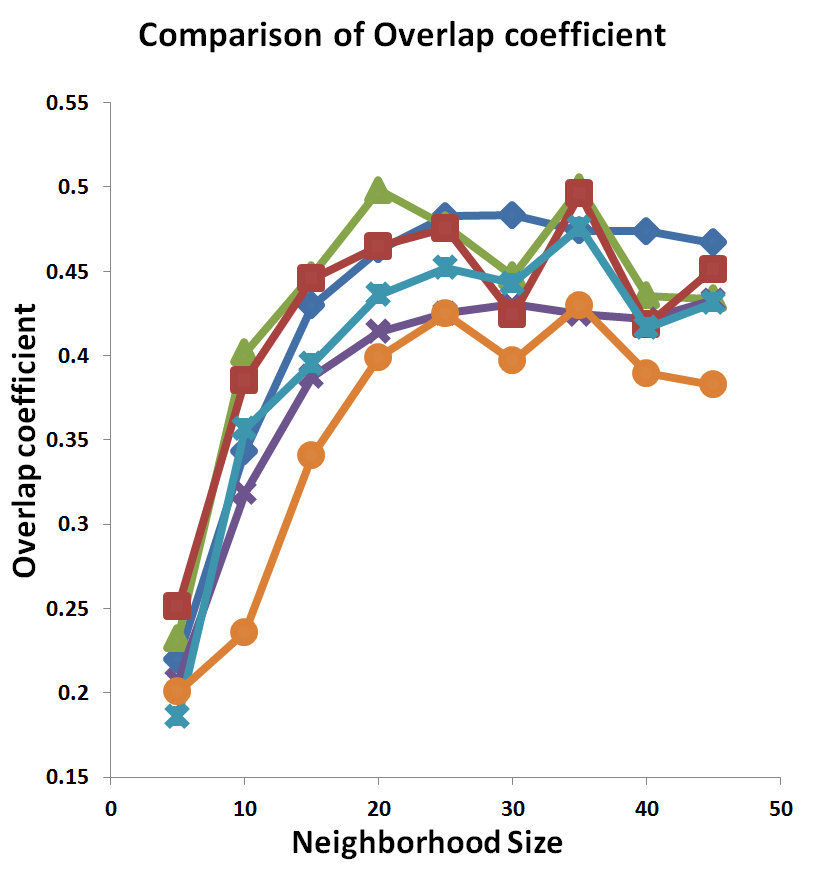}\\
   \multicolumn{2}{c}{\includegraphics[scale=0.70]{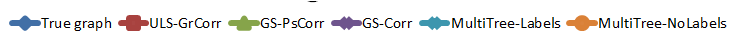}}
   \end{tabular}
   \caption{Comparison of detection performance of the true and learned
graphs for injects based on zip code adjacency.}\label{AdjGraphInject}
 \end{center}
 \vspace{-0.4cm}
 \end{figure}
\begin{figure}[t]
 \begin{center}
 \begin{tabular}{cc}
   \includegraphics[scale=0.23]{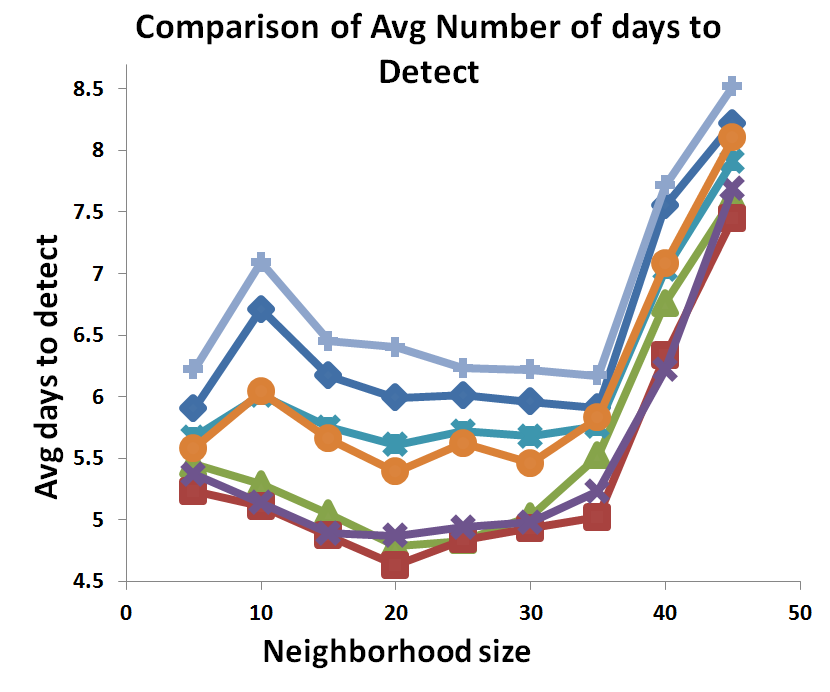} &
   \includegraphics[scale=0.23]{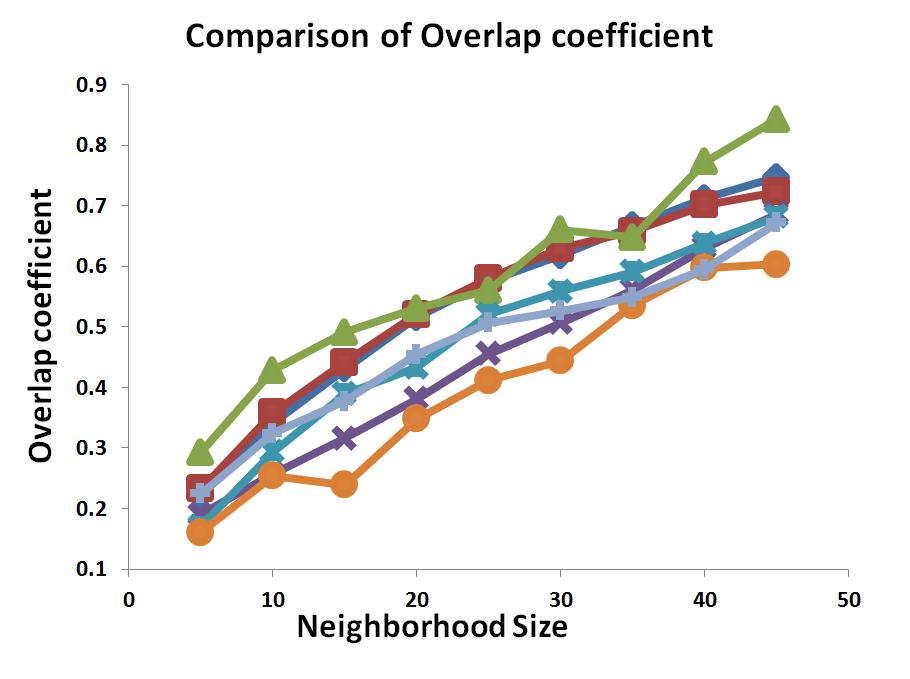}\\
   \multicolumn{2}{c}{\includegraphics[scale=0.60]{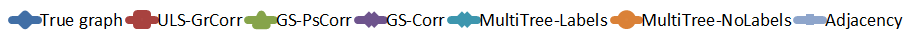}}
 \end{tabular}
   \caption{Comparison of detection performance of the true,
learned, and adjacency graphs for injects based on adjacency with
simulated travel patterns.}\label{AdjRandInject}
 \end{center}
 \vspace{-0.4cm}
 \end{figure}

\subsubsection{Results on adjacency graphs with simulated travel patterns}

Next we compared detection time and spatial accuracy, using the graphs
learned by each of the methods, for simulated injects which spread based
on the zip code adjacency graph with additional random edges added to
simulate travel patterns, as shown in Figure~\ref{AdjRandInject}.  This
figure also shows the detection performance given the true (adjacency
plus travel) graph and the adjacency graph without travel patterns.  We
observe again that GS-PsCorr and ULS-GrCorr achieve similar spatial
accuracy to the true graph, while the original adjacency graph, GS-Corr,
and MultiTree have lower spatial accuracy. Our learned graphs are able
to detect outbreaks 0.8 days earlier than MultiTree, 1.2 days earlier
than the true graph, and 1.7 days earlier than the adjacency graph
without travel patterns.  This demonstrates that our methods can
successfully learn the additional edges due to travel patterns,
substantially improving detection performance.

\subsubsection{Results on random graphs}

 \begin{figure}[t]
 \begin{center}
 \begin{tabular}{cc}
   \includegraphics[scale=0.23]{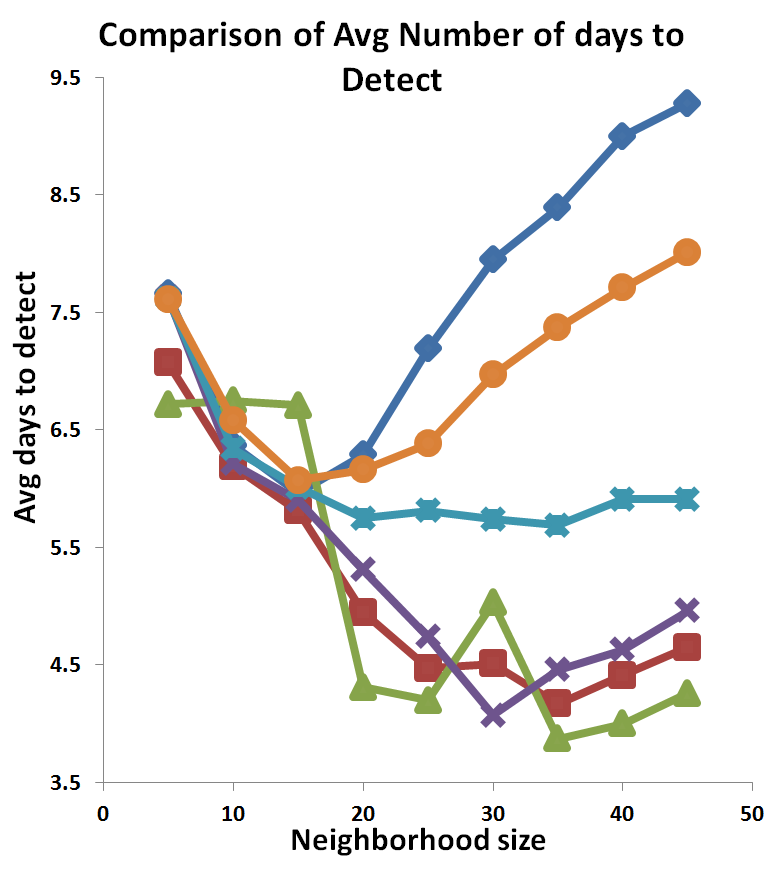} &
   \includegraphics[scale=0.23]{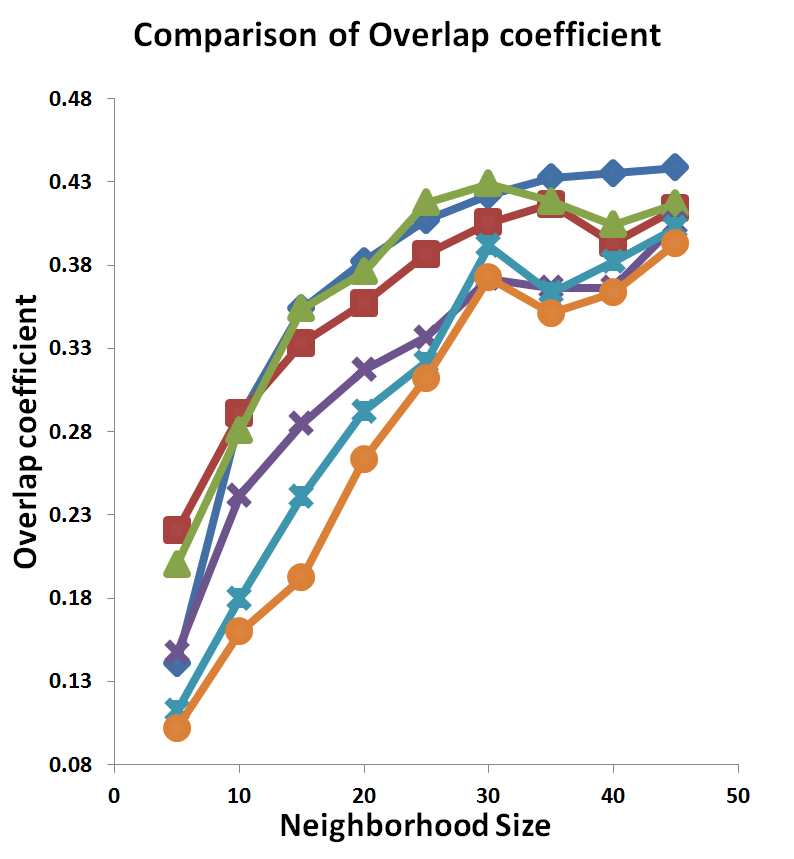}\\
   \multicolumn{2}{c}{\includegraphics[scale=0.70]{Legend_new1.png}}
 \end{tabular}
   \caption{Comparison of detection performance of the true and
learned graphs averaged over seven inject types ($p=0.08,\ldots, 0.20$)
based on Erdos-Renyi random graphs.}\label{RandGraphInject}
 \end{center}
 \vspace{-0.4cm}
 \end{figure}
 \begin{figure}[t]
 \begin{center}
  \begin{tabular}{cc}
   \includegraphics[scale=0.23]{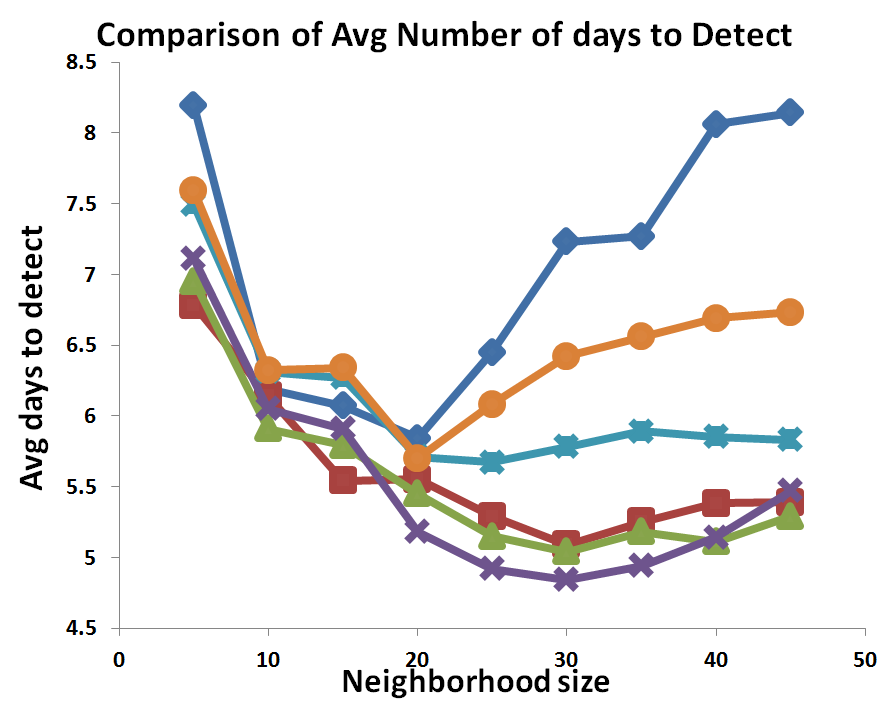} &
   \includegraphics[scale=0.23]{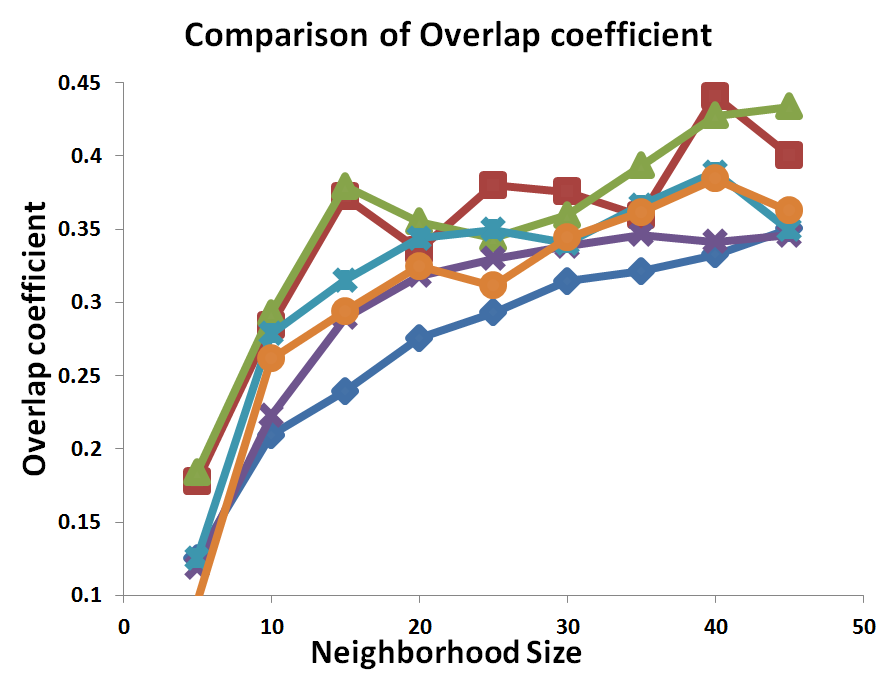}\\
   \multicolumn{2}{c}{\includegraphics[scale=0.70]{Legend_new1.png}}
  \end{tabular}
   \caption{Comparison of detection performance of the true and
learned graphs for injects based on a preferential attachment
graph.}\label{PrefGraphInject}
 \end{center}
 \vspace{-0.4cm}
 \end{figure}

Next we compared detection time and spatial accuracy using the learned
graphs for simulated injects which spread based on Erdos-Renyi and
preferential attachment graphs, as shown in
Figures~\ref{RandGraphInject} and~\ref{PrefGraphInject} respectively.  Each
figure also shows the performance of the true randomly generated graph.
As in the previous experiments, we observe that our learned graphs
achieve substantially faster detection than the true graph and
MultiTree.  For preferential attachment, the learned graphs also achieve
higher spatial accuracy than the true graph, with GS-PsCorr and
ULS-GrCorr again outperforming GS-Corr and MultiTree.  For Erdos-Renyi,
GS-PsCorr and ULS-GrCorr achieve similar spatial accuracy to the true
graph, while GS-Corr and MultiTree have lower accuracy.

\subsubsection{Results on BARD simulations}

We further compared the detection time and spatial accuracy using
learned graphs based on realistic simulations of anthrax bio-attacks, as
shown in Figure~\ref{BARDInject}.  In these simulations there is no ``true''
graph structure as these were generated using spatial information based on environmental
characteristics (wind direction, etc.).  Hence, we compare the performance of various graphs
learned or assumed.  It can be seen that the learned
graphs using GS-PsCorr and ULS-GrCorr achieve substantially faster
detection and higher spatial accuracy, as compared to assuming the adjacency
graph and the graphs learned using GS-Corr and MultiTree.

 \begin{figure}[t]
 \begin{center}
  \begin{tabular}{cc}
   \includegraphics[scale=0.23]{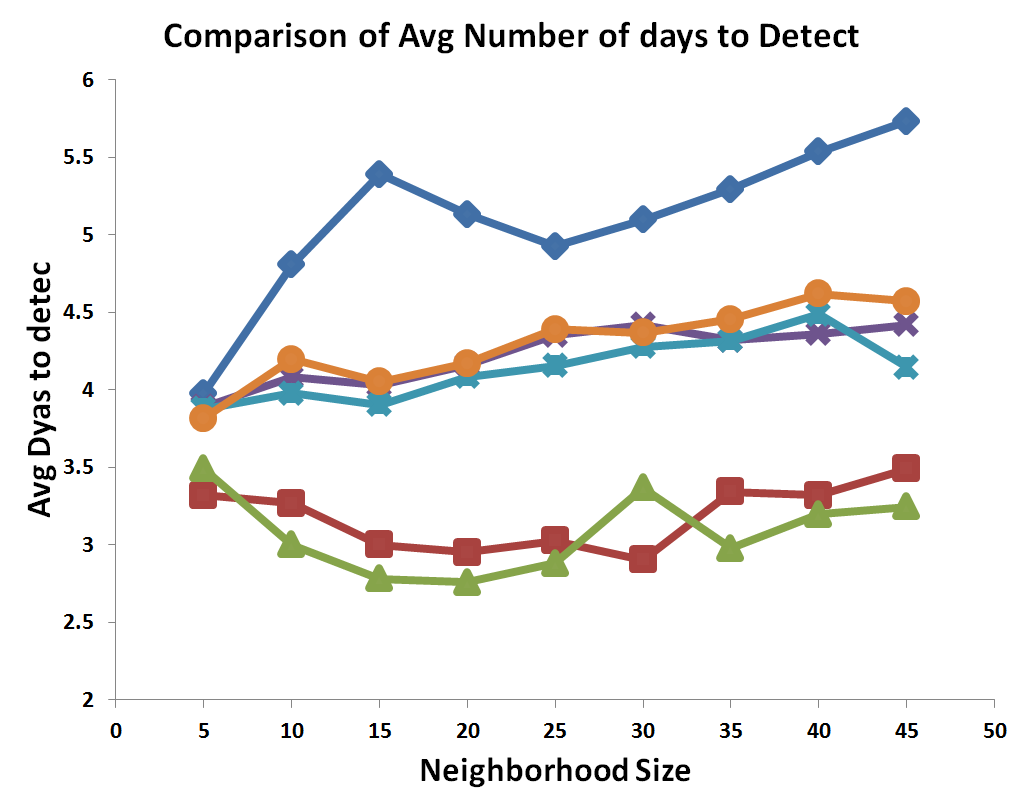} &
   \includegraphics[scale=0.23]{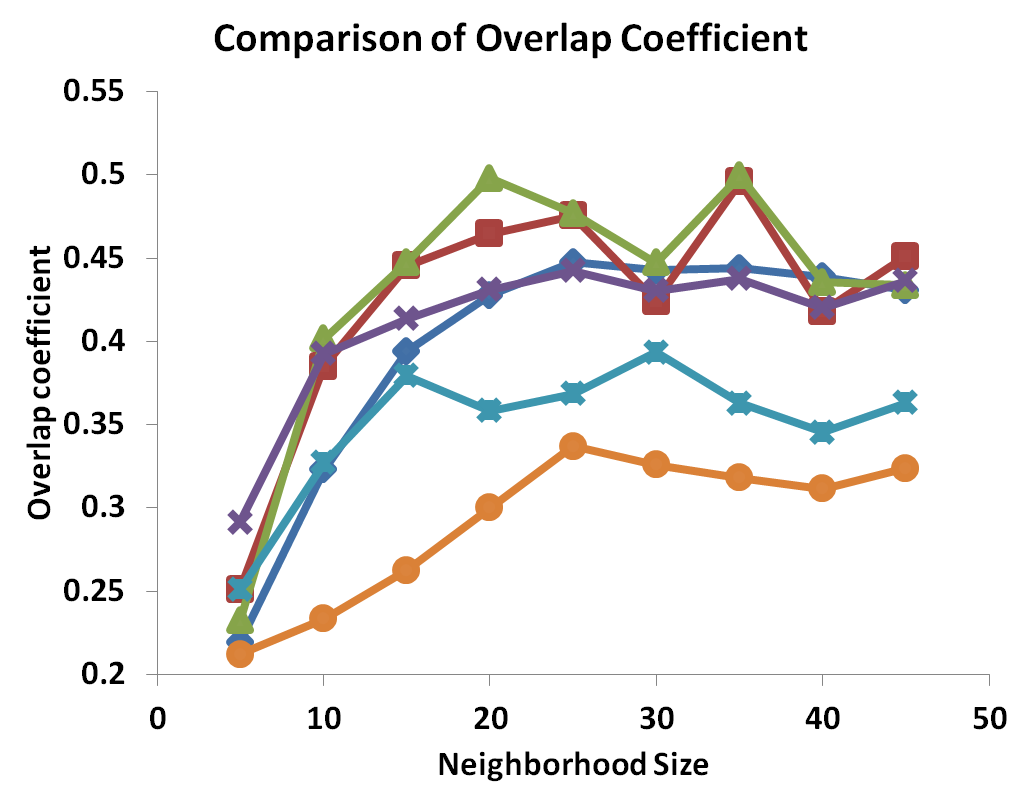}\\
   \multicolumn{2}{c}{\includegraphics[scale=0.50]{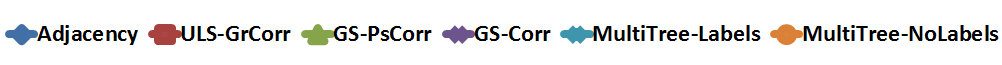}}
  \end{tabular}
   \caption{Comparison of detection performance of the true and
learned graphs for injects based on simulated anthrax bio-attacks.}\label{BARDInject}
 \end{center}
 \vspace{-0.4cm}
 \end{figure}

\subsection{Effect of number of training examples on performance}
\label{num_train_performance}
All of the experiments discussed above (except for the BARD simulations) assume
$J=200$ unlabeled training
examples for learning the graph structure.  We now evaluate the graphs learned by
two of our best performing methods, GS-PsCorr and ULS-GrCorr, using smaller
numbers of training examples ranging from $J=20$ to $J=200$.  Simulated outbreaks
were generated based on the preferential attachment graph described in \S6.2.2.
As shown in Figure~\ref{NumExampAffectPerf}, GS-PsCorr and ULS-GrCorr perform very
similarly both in terms of average number of days to detect and spatial accuracy.
Performance of both methods improves with increasing training set size,
outperforming the true graph structure for $J > 60$.
\begin{figure}[t]
 \begin{center}
  \begin{tabular}{cc}
   \includegraphics[scale=0.2]{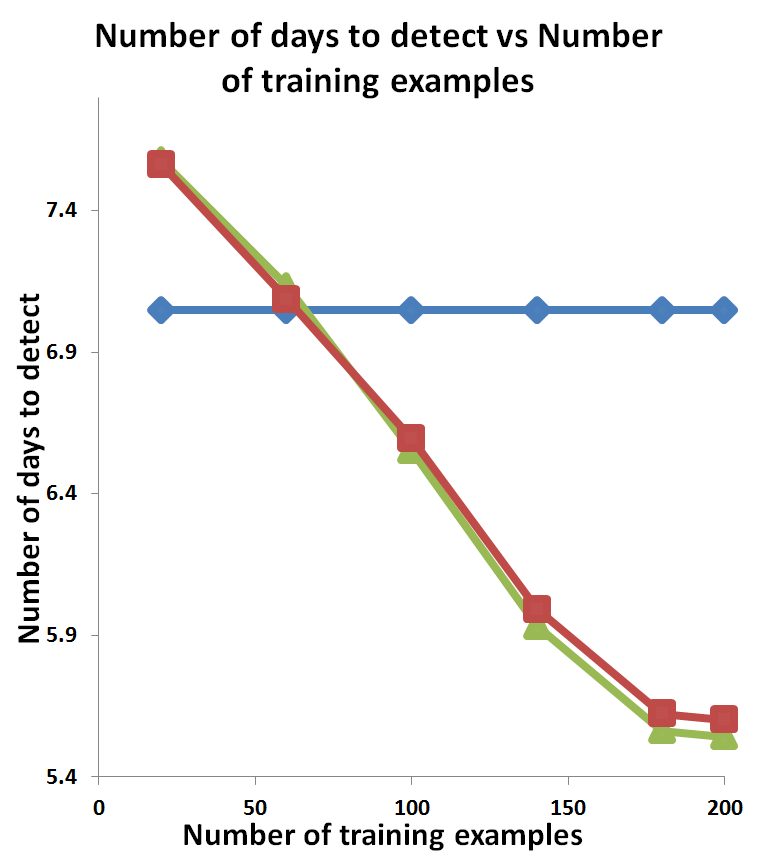} &
   \includegraphics[scale=0.2]{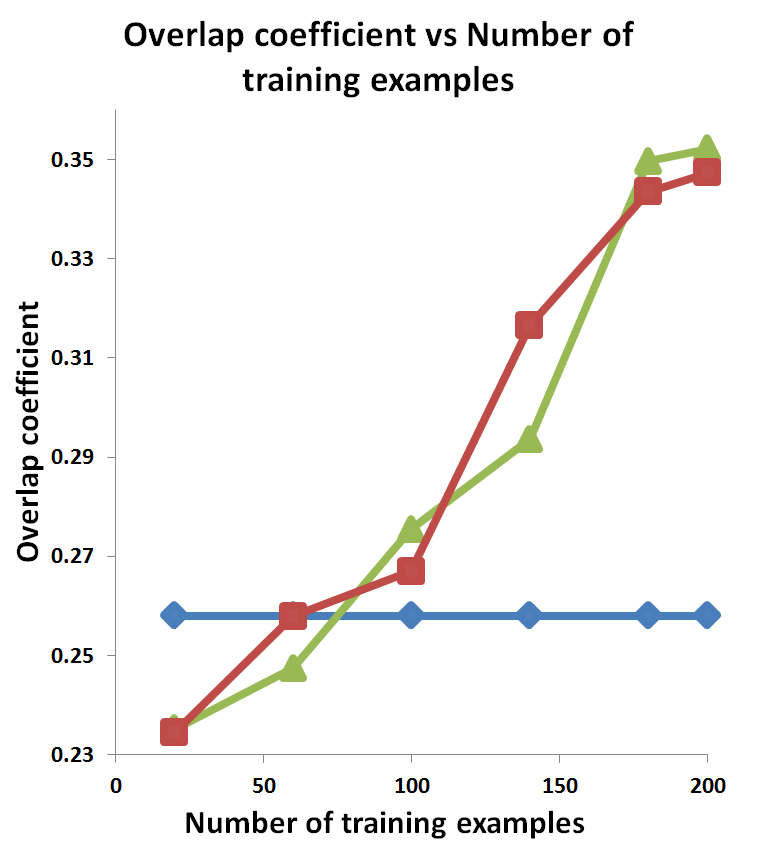}\\
   \multicolumn{2}{c}{\includegraphics[scale=0.3]{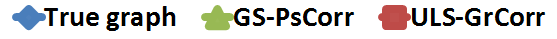}}
  \end{tabular}
   \caption{Effect of number of training examples on performance of GS-PsCorr
   and ULS-GrCorr.}\label{NumExampAffectPerf}
 \end{center}
 \vspace{-0.4cm}
 \end{figure}

\subsection{Effect of percentage of injects in training data on
performance}

All of the experiments discussed above (except for the BARD simulations) assume
that the $J$ unlabeled training
examples are each a ``snapshot'' of the observed count data $c_i^t$ at each node
$v_i$ during a time when an event is assumed to be occurring. However, in practice
the training data may be \emph{noisy}, in the sense that some fraction of the
training examples may be from time periods where no events are present.  Thus we
evaluate performance of the graphs learned by GS-PsCorr and ULS-GrCorr (for
simulated outbreaks based on the preferential attachment graph described in \S6.2.2)
using a set of $J=200$ training examples, where proportion $p$ of the examples are
based on simulated inject data, and proportion $1-p$ are drawn from the original
Emergency Department data with no outbreaks injected.  As shown in
Figure~\ref{PercAffectPerf}, the performance of both GS-PsCorr and ULS-GrCorr
improves as the proportion of injects $p$ in the training data increases.  For $p
\ge 0.6$, both methods achieve more timely detection than the true underlying graph,
with higher spatial accuracy. These results demonstrate that our graph structure
learning methods, while assuming that all training examples contain
true events, are robust to violations of this assumption.

\begin{figure}[t]
 \begin{center}
  \begin{tabular}{cc}
   \includegraphics[scale=0.2]{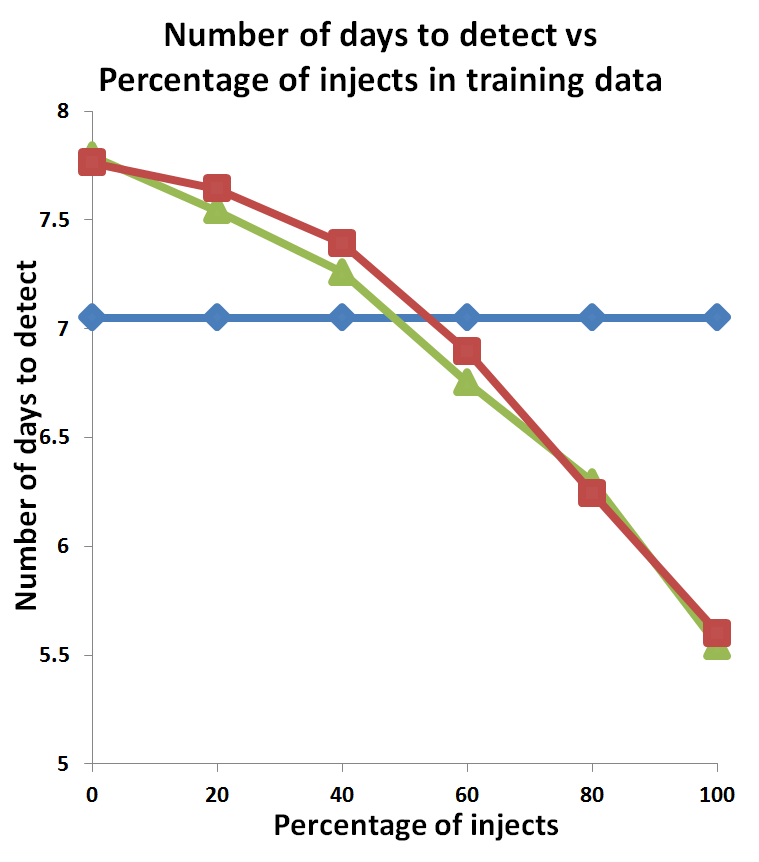}&
   \includegraphics[scale=0.2]{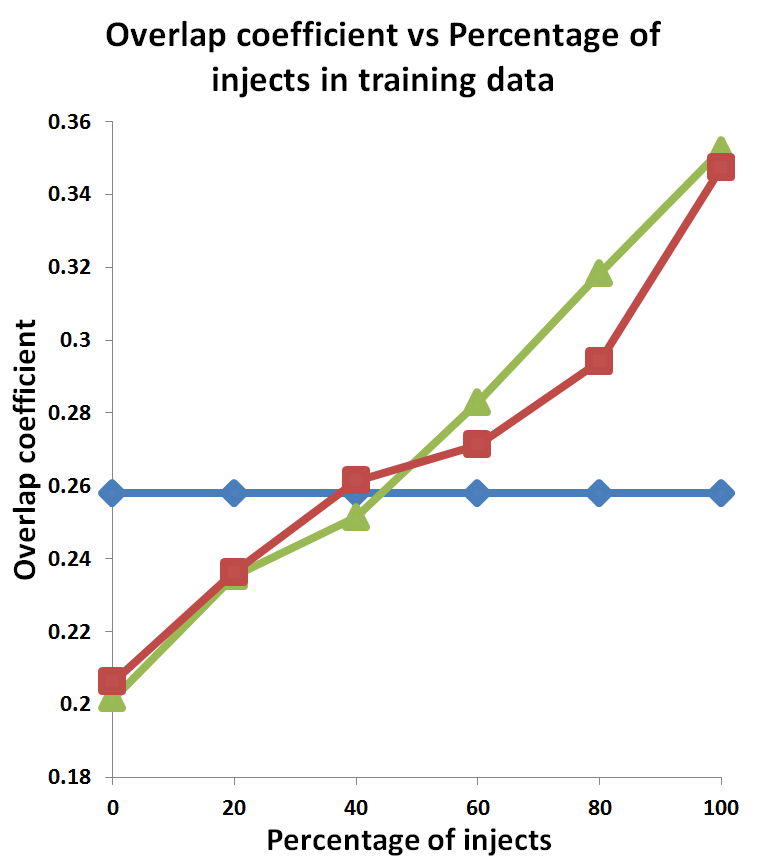}\\
   \multicolumn{2}{c}{\includegraphics[scale=0.3]{Legend_3.png}}
  \end{tabular}
   \caption{Effect of percentage of injects in training data on
performance of GS-PsCorr and ULS-GrCorr learned graphs.}\label{PercAffectPerf}
 \end{center}
 \vspace{-0.4cm}
 \end{figure}

\section{Conclusions and Future Work}

In this work, we proposed a novel framework to learn graph structure from unlabeled
data, based on comparing the most anomalous subsets detected with and without the
graph constraints.  This approach can accurately and efficiently learn a graph
structure which can then be used by graph-based event detection methods such as
GraphScan, enabling more timely and more accurate detection of events (such as
disease outbreaks) which spread based on that latent structure.  Within our general
framework for graph structure learning, we compared five approaches which differed
both in the underlying detection method (BestSubgraph) and the method used to choose
the next edge for removal (BestEdge), incorporated into a provably efficient greedy
search procedure.  We demonstrated both theoretically and empirically that our
framework requires fewer calls to BestSubgraph than a naive greedy approach,
$O(N^3)$ as compared to $O(N^4)$ for exact greedy search, and $O(N\log N)$ as
compared to $O(N^2)$ for approximate greedy search, resulting in 2 to 3 orders
of magnitude speedup in practice.

We tested these approaches on various types of simulated disease
outbreaks, including outbreaks which spread according to spatial
adjacency, adjacency plus simulated travel patterns, random graphs
(Erdos-Renyi and preferential attachment), and realistic simulations of
an anthrax bio-attack.  Our results demonstrated that two of our
approaches, GS-PsCorr and ULS-GrCorr, consistently outperformed the
other three approaches in terms of spatial accuracy, timeliness of
detection, and accuracy of the learned graph structure.  Both GS-PsCorr
and ULS-GrCorr consistently achieved more timely and more accurate event
detection than the recently proposed MultiTree
algorithm~\citep{Gomez-Redriguez:2012:MultiTree}, even when MultiTree was
provided with labeled data not available to our algorithms.  We observed
a tradeoff between scalability and detection: GS-PsCorr had slightly
better detection performance than ULS-GrCorr, while ULS-GrCorr was able
to scale to larger graphs (500 nodes vs.~100 nodes).  None of
our approaches are designed to scale to massive graphs with millions of
nodes (e.g.~online social networks); they are most appropriate for
moderate-sized graphs where labeled data is not available and timely,
accurate event detection is paramount.

In general, our results demonstrate that the graph structures learned by
our framework are similar to the true underlying graph structure,
capturing nearly all of the true edges but also adding some additional
edges.  The resulting graph achieves similar spatial accuracy to the true
graph, as measured by the overlap coefficient between true and detected
clusters.  Interestingly, the learned graph often has \emph{better}
detection power than the true underlying graph, enabling more timely
detection of outbreaks or other emerging events.  This result can be
better understood when we realize that the learning procedure is designed
to capture not only the underlying graph structure, but the
characteristics of the events which spread over that graph. Unlike
previously proposed methods, our framework learns these characteristics
from unlabeled training examples, for which we assume that an event is
occurring but are not given the affected subset of nodes.  By finding
graphs where the highest connected subgraph score is consistently close to
the highest unconstrained subset score when an event is occurring, we
identify a graph structure which is optimized for event detection.
Our ongoing work focuses on extending the graph structure learning
framework in several directions, including learning graph structures
with directed rather than undirected edges, learning graphs with
weighted edges, and learning dynamic graphs where the edge structure can
change over time.

\section*{Acknowledgments}
This work was partially supported by NSF grants IIS-0916345,
IIS-0911032, and IIS-0953330. Preliminary work was presented at the 2011
International Society for Disease Surveillance Annual Conference, with a
1-page abstract published in the \emph{Emerging Health Threats Journal}.
This preliminary work did not include the theoretical
developments and results, the computational algorithmic advances, and the
large set of comparison methods and evaluations considered here.

\appendix

\section{Proofs of Lemma 1 and Lemma 2}
\label{appendix_proof}

We begin with some preliminaries which will be used in both proofs.
Following the notation in~\citet{neill-ltss}, we write the distributions
from the exponential family as $\log P(x\:|\:\mu) = T(x)\theta(\mu) -
\psi(\theta(\mu)) = T(x)\theta(\mu) - \mu\theta(\mu) + \phi(\mu)$, where
$T(x)$ is the sufficient statistic, $\theta(\mu)$ is a function mapping
the mean $\mu$ to the natural parameter $\theta$, $\psi$ is the
log-partition function, and $\phi$ is the convex conjugate of $\psi$. By
assumption (A2), $F(S)$ is an expectation-based scan statistic in the
\emph{separable exponential family}, defined by~\citet{neill-ltss} as
follows:

\newtheorem{defn}{Definition}
\begin{defn} The separable exponential family is a subfamily of the exponential family such that $\theta(q\mu_i) = z_i \theta_0(q) + v_i$, where the function $\theta_0$ depends only on $q$, while $z_i$ and $v_i$ can depend on $\mu_i$ and $\sigma_i$ but are independent of $q$.
\end{defn}

\noindent Such functions can be written in the form $F(S) = \max_{q>1} \sum_{s_i \in S} \lambda_i(q)$, where:
\[\lambda_i(q) = T(x_i)z_i(\theta_0(q) - \theta_0(1)) + \mu_i z_i\left(\theta_0(1) - q \theta_0(q) + \int_1^q \theta_0(x)\: dx\right).\]

\citet{speakman15} have shown that $\lambda_i(q)$ is a concave function with global maximum at $q = q_i^{mle}$ and zeros at $q=1$ and
$q=q_i^{max}$, where $q_i^{mle} = \frac{T(x_i)}{\mu_i}$ and $q_i^{max}$ is an increasing function of $q_i^{mle}$.  Considering the corresponding
excess risks $r_i^{mle} = q_i^{mle} - 1$ and $r_i^{max} = q_i^{max} - 1$, we know:

\begin{equation}
\label{r_mapper}
r_i^{mle} = r_i^{max} \left( \frac{\theta_0(r_i^{max}+1) - \bar\theta_0}{\theta_0(r_i^{max}+1) - \theta_0(1)} \right),
\end{equation}
where $\bar\theta_0 = \frac{1}{r_i^{max}} \int_1^{r_i^{max}+1} \theta_0(x) dx$ is the average value of $\theta_0$ between 1 and $r_i^{max}+1$.

From this equation, it is easy to see that $r_i^{mle} \le \frac{r_i^{max}}{2}$ when $\theta_0$ is concave, as is the case for the Poisson, Gaussian,
and exponential distributions, with $\theta_0(q) = \log(q)$, $q$, and $-\frac{1}{q}$ respectively.  For the Gaussian, $r_i^{mle} = \frac{r_i^{max}}{2}$
since $\theta_0$ is linear, while $r_i^{mle} < \frac{r_i^{max}}{2}$ for the Poisson and exponential.

Further, the assumption of an expectation-based scan statistic in the separable exponential family (A2) implies that the score function $F(S)$ satisfies
the linear-time subset scanning property~\citep{neill-ltss} with priority function $G(v_i) = \frac{T(x_i)}{\mu_i}$.  This means that the highest-scoring
unconstrained subset $S_j^\ast = \arg\max_S F(S)$ can be found by evaluating the score of only $|V|$ of the $2^{|V|}$ subsets of nodes, that is,
$S_j^\ast = \{v_{(1)},v_{(2)},\ldots,v_{(k)}\}$ for some $k$ between 1 and $|V|$, where $v_{(i)}$ represents the $i$th highest-priority node.

Given the set of all nodes $\{v_{(1)}, v_{(2)}, \ldots, v_{(|V|)}\}$ sorted by priority, we note that the assumption of a 1-strong signal implies that the true
affected subset $S_j^T = \{v_{(1)},v_{(2)},\ldots,v_{(t)}\}$, where $t$ is the cardinality of $S_j^T$.  Thus, for Lemma 1 we need only to show that $|S_j^\ast| \ge t$,
while for Lemma 2 we must show $|S_j^\ast| \le t$.  We can now prove:

\twolemma*
\begin{proof}

Let $\alpha_j = \frac{r_{\max}^{\text{aff},j}}{f(r_{\max}^{\text{aff},j})}$, where $f(r_i^{max}) = r_i^{mle}$ is the function defined in Equation (\ref{r_mapper})
above.  For distributions with concave $\theta_0(q)$, such as the Poisson, Gaussian, and exponential, we know that $f(r) \le \frac{r}{2}$, and thus $\alpha_j \ge
2$.  Now, the assumption of $\alpha_j$-homogeneity implies $\frac{r_{\max}^{\text{aff},j}}{r_{\min}^{\text{aff},j}} <
\frac{r_{\max}^{\text{aff},j}}{f(r_{\max}^{\text{aff},j})}$, $r_{\min}^{\text{aff},j} > f(r_{\max}^{\text{aff},j})$, and since $f(r)$ is an increasing and therefore
invertible function, $f^{-1}(r_{\min}^{\text{aff},j}) > r_{\max}^{\text{aff},j}$.

Now we note that $r_{\min}^{\text{aff},j}$ is the observed excess risk $\frac{T(x_i)}{\mu_i} - 1$ for the lowest-priority affected node $v_{(t)}$, where $t$ is the cardinality of $S_j^T$, while
$r_{\max}^{\text{aff},j}$ is the observed excess risk for the highest-priority affected node $v_{(1)}$.  Moreover, the contribution of node $v_{(t)}$ to the log-likelihood
ratio statistic, $\lambda_t(q)$, will be positive for all $q < 1 + f^{-1}(r_{\min}^{\text{aff},j})$, and we know that the maximum likelihood estimate of $q$ for any
subset of nodes $\{v_{(1)},v_{(2)},\ldots,v_{(k)}\}$ will be at most $q = 1 +  r_{\max}^{\text{aff},j} < 1 + f^{-1}(r_{\min}^{\text{aff},j})$.  Thus node $v_{(t)}$ will make a positive contribution to the
log-likelihood ratio and will be included in $S_j^\ast$, as will nodes $v_{(1)} \ldots v_{(t-1)}$.  Hence $|S_j^\ast| \ge t$, and $S_j^\ast \supseteq S_j^T$.
\end{proof}

\alphalemma*
\begin{proof}

Let $\beta_j = \frac{f^{-1}(r_{\max}^{\text{unaff},j})}{r_{\max}^{\text{unaff},j}}$, where $f^{-1}(r_i^{mle}) = r_i^{max}$ is the inverse of the function defined in Equation (\ref{r_mapper})
above.  For distributions with convex $\theta_0(q)$, such as the Gaussian, we know that $f^{-1}(r) \le 2r$, and thus $\beta_j \le 2$.
Now, the assumption that the signal is $\frac{\beta_j}{\eta_j}$-strong, where $\eta_j = \frac{\sum_{v_i \in S_j^T} \mu_i}{\sum_{v_i} \mu_i}$,
implies $\frac{r_{\min}^{\text{aff},j}}{r_{\max}^{\text{unaff},j}}  > \frac{f^{-1}(r_{\max}^{\text{unaff},j})}{\eta_j r_{\max}^{\text{unaff},j}}$
and thus $\left(\frac{\sum_{v_i \in S_j^T} \mu_i}{\sum_{v_i} \mu_i}\right) r_{\min}^{\text{aff},j} > f^{-1}(r_{\max}^{\text{unaff},j})$.

Now we note that $r_{\min}^{\text{aff},j}$ is the observed excess risk $g_{ij} = \frac{T(x_i)}{\mu_i} - 1$ for the lowest-priority affected node $v_{(t)}$,
and $r_{\max}^{\text{unaff},j}$ is the observed excess risk for the highest-priority unaffected node $v_{(t+1)}$, where $t$ is the cardinality of $S_j^T$.
Moreover, the contribution of node $v_{(t+1)}$ to the log-likelihood ratio statistic, $\lambda_{t+1}(q)$, will be negative for all $q > 1 + f^{-1}(r_{\max}^{\text{unaff},j})$.
Finally, we know that the maximum likelihood estimate of $q$ for any $\{v_{(1)},v_{(2)},\ldots,v_{(k)}\}$ will be at least $q = \frac{\sum_{v_i} T(x_i)}{\sum_{v_i} \mu_i} = 1 + r$, where $r =
\frac{\sum_{v_i} g_{ij}\mu_i}{\sum_{v_i} \mu_i} = \frac{\sum_{v_i \in S_j^T} g_{ij}\mu_i + \sum_{v_i \not\in S_j^T}g_{ij}\mu_i}{\sum_{v_i} \mu_i} >
\frac{\sum_{v_i \in S_j^T} r_{\min}^{\text{aff},j} \mu_i}{\sum_{v_i} \mu_i} > f^{-1}(r_{\max}^{\text{unaff},j})$, where the key step is to lower bound
each $g_{ij}$ by $r_{\min}^{\text{aff},j}$ for $v_i \in S_j^T$ and by 0 for $v_i \not\in S_j^T$ respectively.
Thus node $v_{(t+1)}$ will make a negative contribution to the
log-likelihood ratio and will be excluded from $S_j^\ast$, as will nodes $v_{(t+2)} \ldots v_{(|V|)}$.  Hence $|S_j^\ast| \le t$, and $S_j^\ast \subseteq S_j^T$.
\end{proof}

\bibliography{sigproc}

\end{document}